\documentclass{article}

 \usepackage[nonatbib,main,final]{neurips_2026}

\usepackage[utf8]{inputenc} 
\usepackage[T1]{fontenc}    
\usepackage{hyperref}       
\usepackage{url}            
\usepackage{booktabs}       
\usepackage{amsfonts}       
\usepackage{nicefrac}       
\usepackage{microtype}      
\usepackage{xcolor}         

\usepackage{amsmath}
\usepackage{multirow}
\usepackage{graphicx}
\usepackage{algorithm}
\usepackage{algpseudocode}
\usepackage{multicol}
\usepackage{multirow}
\usepackage{graphicx}
\usepackage{enumitem}
\usepackage[numbers]{natbib}
\usepackage{amsthm}      
\usepackage{amsmath, amssymb}

\newtheorem{theorem}{Theorem}[section]    
         
\newtheorem{proposition}[theorem]{Proposition}

\usepackage{ulem}
\usepackage{booktabs}
\usepackage{wrapfig}

\newcommand{\boldres}[1]{{\textbf{\textcolor{red}{#1}}}}
\newcommand{\secondres}[1]{{\underline{\textcolor{blue}{#1}}}}

\makeatletter
\renewcommand{\@notice}{}
\makeatother

\title{Learning Spatio-Temporal Foundation Models\\ from Pure Synthetic Data}

\author{
    \textbf{Yutong Feng\textsuperscript{1}\textsuperscript{2}\thanks{Work done at Hong Kong University of Science and Technology (Guangzhou).}, Shiyuan Piao\textsuperscript{3}, Yutong Xia\textsuperscript{2}, Xu Liu\textsuperscript{2}, Wenqi Fan\textsuperscript{4},} \\
    \textbf{Fugee Tsung\textsuperscript{3}, See-Kiong Ng\textsuperscript{2}, Yuxuan Liang\textsuperscript{1}\thanks{Y. Liang is the corresponding author. Email: yuxliang@outlook.com}} \\
    \textsuperscript{1}Hong Kong University of Science and Technology (Guangzhou) \\
    \textsuperscript{2}National University of Singapore \\
    \textsuperscript{3}Hong Kong University of Science and Technology \\
    \textsuperscript{4}Hong Kong Polytechnic University \\
    \texttt{ytfeng.caspian@163.com; shiyuan.piao@connect.ust.hk}\\
    \texttt{yuxliang@outlook.com}\\
}

\begin{document}

\maketitle

\begin{abstract}
  Spatio-Temporal Foundation Models (STFMs) aim to learn generalizable representations of complex dynamical systems across space and time. However, existing approaches suffer from distributional bias in real-world pre-training data, structural bottlenecks of autoregressive or diffusion-based paradigms, and objectives that overemphasize point-wise reconstruction in noisy observation space.
We propose \textbf{NeoST}, the first spatio-temporal foundation model pre-trained solely on procedurally generated synthetic systems. NeoST introduces a scalable synthetic pre-training corpus to mitigate real-world bias, a latent-space reasoning architecture that generates and iteratively refines multiple future trajectories without sequential error accumulation, and latent-space objectives that emphasize structural dynamics and enable inference-time correction under distribution shifts.
Extensive experiments across diverse real-world benchmarks show that NeoST consistently outperforms existing STFMs in diverse real-world spatio-temporal systems, achieves superior long-horizon stability and inference efficiency.
\end{abstract}

\section{Introduction}

Spatio-Temporal Foundation Models (STFMs) \cite{liang2025foundation,fang2026unraveling,goodge2025spatio} are large-scale pre-trained models that learn generalizable representations of spatio–temporal dependencies from diverse and heterogeneous spatio-temporal data, enabling a single unified backbone to support multiple downstream tasks across domains through zero-shot or lightweight adaptation. 
Unlike conventional task-specific spatio-temporal models \cite{jin2023spatio} that are trained independently for each dataset and objective, STFMs aim to distill universal structural priors of space and time into a scalable parameterization, shifting the paradigm from one-model-per-task engineering to foundation-level representation learning. 
STFMs unify fragmented modeling paradigms, enabling label-efficient cross-domain transfer and robust generalization under distribution shifts. Furthermore, by distilling the structural evolution of complex dynamic systems \cite{feng2025flownet}, STFMs establish a critical foundation for data-driven World Models \cite{qu2026representation,zhang2026hierarchical}.

Despite their promise, the development of STFMs faces fundamental challenges across the entire modeling pipeline, beginning with the quality and diversity of available training data \cite{fang2026unraveling}. 
Models pre-trained on specific real-world collections often overfit to the idiosyncratic topology and temporal rhythms of those specific sensor deployments, hindering their generalizability to unseen scenarios.
This imbalance induces strong distributional bias during pre-training, leading models to overfit domain-specific patterns and underperform when transferred to less-represented systems. Additionally, real-world spatio-temporal data are often plagued by sensor noise, irregular sampling, and extensive missing values \cite{wang2023observed}. Such imperfections not only degrade representation quality but also entangle intrinsic system dynamics with measurement artifacts, making it difficult for STFMs to distill clean and universal structural priors of space and time.

Beyond data limitations, current STFMs are largely built upon either autoregressive \cite{liang2024foundation,benidis2022deep,lim2021time} or diffusion-based paradigms \cite{10.1145/3783986,yang2024survey,wen2023diffstg,hu2024towards}, each of which introduces structural bottlenecks. Autoregressive models generate future states sequentially, conditioning each prediction on previously generated outputs. While conceptually straightforward, this paradigm is prone to error accumulation: early prediction mistakes propagate and amplify over long horizons, hindering stable long-term forecasting and limiting the model’s capacity for global reasoning and revision. Diffusion-based models alleviate some issues of sequential dependency by modeling complex distributions through iterative denoising, but they typically require dozens to hundreds of sampling steps during inference. Such computational overhead is prohibitive in time-sensitive scenarios, such as disaster response or real-time urban management, where rapid and reliable predictions are essential.

Challenges further arise in training objectives and inference paradigms \cite{fang2026unraveling}. Spatio-temporal signals often exhibit sparse or weak semantics in the raw time domain, where point-wise losses \cite{jadon2024comprehensive} encourage models to focus on local fluctuations while overlooking global structures such as trends, periodicities, and latent dynamical mechanisms \cite{cuturi2017soft,le2019shape}. Optimizing directly in the observation space may therefore bias the model toward short-term fidelity rather than structural understanding. In addition, most existing STFMs lack effective test-time adaptation capabilities. When confronted with distribution shifts, such as stemming from seasonal changes, policy interventions, or sensor redeployments, these models typically treat the observed input sequence as fixed context, without explicitly modeling or correcting for the shift. The inability to leverage incoming observations to recalibrate predictions at inference time significantly constrains robustness and generalization in open-world settings.

\begin{figure}
    \centering
    \vspace{-20pt}
    \includegraphics[width=0.95\linewidth]{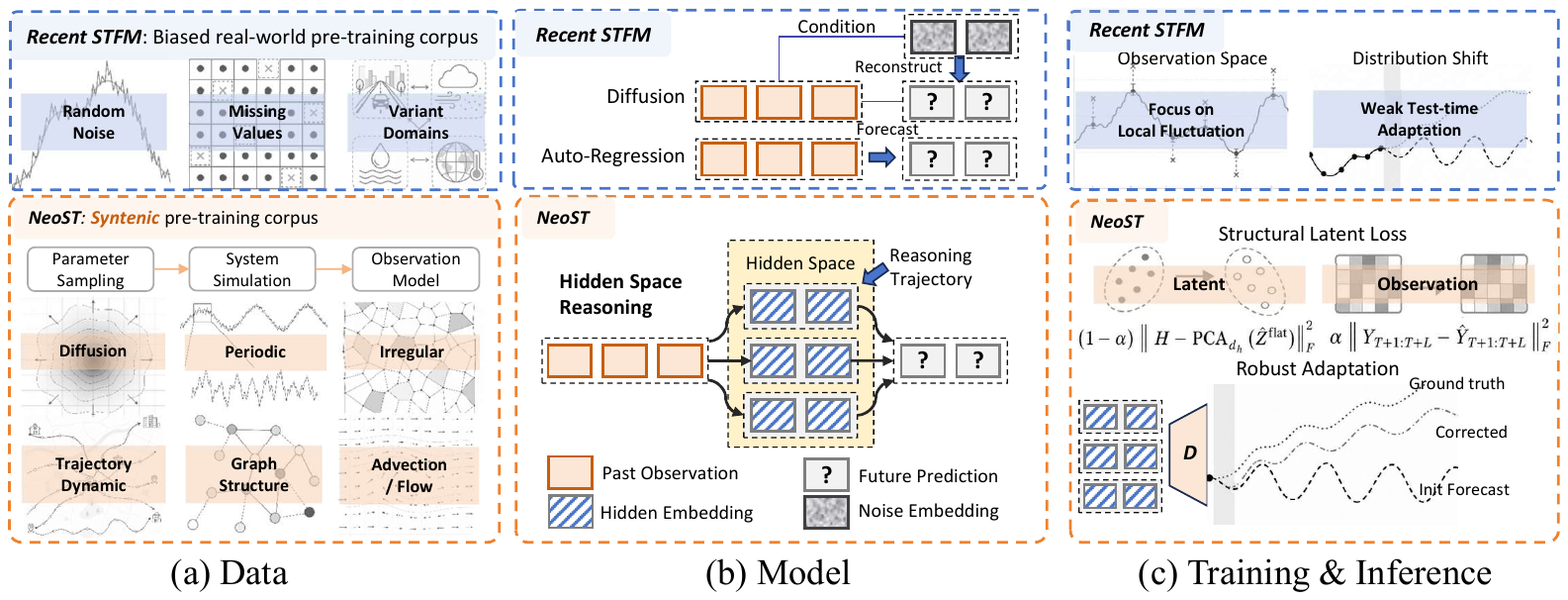}
    \caption{Comparison of Paradigms: NeoST vs. Recent STFM.}
    \vspace{-13pt}
    \label{fig:intro}
\end{figure}

To tackle these challenges, we propose \textbf{NeoST}, the first spatio-temporal foundation model \textbf{pre-trained solely on synthetic spatio-temporal data}. Our main contributions can be summarized as the following three aspects:
\begin{itemize}[leftmargin=*]
    \item On the \textbf{data} side, we depart from conventional reliance on large but biased real-world datasets and instead construct a synthetic pre-training corpus with virtually unlimited scale and controllable diversity. By procedurally generating spatio-temporal systems that span a broad range of dynamics, noise levels, and structural patterns, we mitigate distributional bias and avoid the detrimental effects of missing values and measurement noise in low-quality pre-training data. 

    \item On the \textbf{architecture} side, we propose a latent-space reasoning paradigm for STFMs, replacing the dominant autoregressive and diffusion-based formulations. Our model generates multiple candidate future trajectories directly in latent space and performs iterative refinement through patch-level and trajectory-level correction mechanisms. 
    By decoupling reasoning from step-wise generation, NeoST is capable of deeper deliberation over alternative futures under a fixed computational budget.
    \item On the \textbf{training \& inference} side, NeoST compute losses in latent space rather than directly in the raw time domain. This approach encourages the model to learn and reason about the underlying evolutionary dynamics and structural patterns of spatio-temporal systems, rather than merely fitting superficial value-level details. 
    Moreover, the latent reasoning process naturally enables inference-time correction based on observed inputs, enhancing robustness under distribution shifts.
   
\end{itemize}


\section{Related Works}

The success of large-scale pre-training in NLP has inspired analogous efforts in time series and spatio-temporal data \cite{jin2023large,ni2026streasoner}. Time Series Foundation Models (TSFMs) \cite{liang2024foundation} such as TimesFM \cite{Das2024TimesFM}, Chronos \cite{ansari2024chronos}, Moirai \cite{woo2024unified,liu2024moirai,liu2025moirai}, UniTime \cite{liu2024unitime}, MOMENT \cite{goswami2024moment}, and Time-MOE \cite{shi2024time} consolidate multiple forecasting tasks within a single pre-trained backbone, demonstrating the viability of foundation-scale temporal representation learning. However, these models ignore the rich spatial structure present in real-world systems.
Emerging STFMs extend this paradigm to spatially resolved domains \cite{liang2025foundation,goodge2025spatio}. OpenCity \cite{li2024opencity} integrates Transformer-GNN hybrids pre-trained on heterogeneous traffic datasets to enable zero-shot urban forecasting. UniST \cite{yuan2024unist} introduces knowledge-guided prompts for unified urban spatio-temporal prediction across few-shot and zero-shot settings. 
UrbanGPT \cite{li2024urbangpt} couples spatio-temporal encoders with LLM prompting for instruction-based spatial reasoning under limited supervision. 
In climate science \cite{liang2023airformer,ijcai2024p0803}, models such as Pangu-Weather \cite{bi2023accurate} and ClimaX \cite{nguyen2023climax} apply large Transformer architectures to global reanalysis data, achieving spatio-temporal generality across physical domains.
Though promising, existing STFMs share common limitations: they rely on real-world data that is domain-biased and noisy, adopt either error-prone autoregressive decoding or computationally intensive diffusion sampling, and optimize directly in the observation space without structural abstraction. 
\textit{An expanded version of the related works can be found in the Appendix \ref{appendix:related works}.}

\section{Synthetic Spatio-Temporal Data}
\label{sec:data}

We construct our synthetic data engine by explicitly parameterizing a family of stochastic graph dynamical systems defined on a graph 
$\mathcal{G} = (\mathcal{V}, \mathcal{E})$ 
with $|\mathcal{V}| = N$. 
Let $\mathbf{A} \in \mathbb{R}^{N \times N}$ denote the adjacency matrix and 
$\mathbf{L}$ the corresponding normalized graph Laplacian. 
At each time step $t$, the latent dynamical state is represented as 
$\mathbf{H}_t \in \mathbb{R}^{N \times d_h}$, 
where $d_h$ denotes the latent channel dimension. 
The observable signal $\mathbf{x}_t \in \mathbb{R}^{N}$ 
is obtained through a random linear projection of $\mathbf{H}_t$, while the latent state itself remains fully accessible during training.
Further analysis is provided in Appendix \ref{appendix:theory}.

\begin{figure}[!h]
    \centering
    \includegraphics[width=\linewidth]{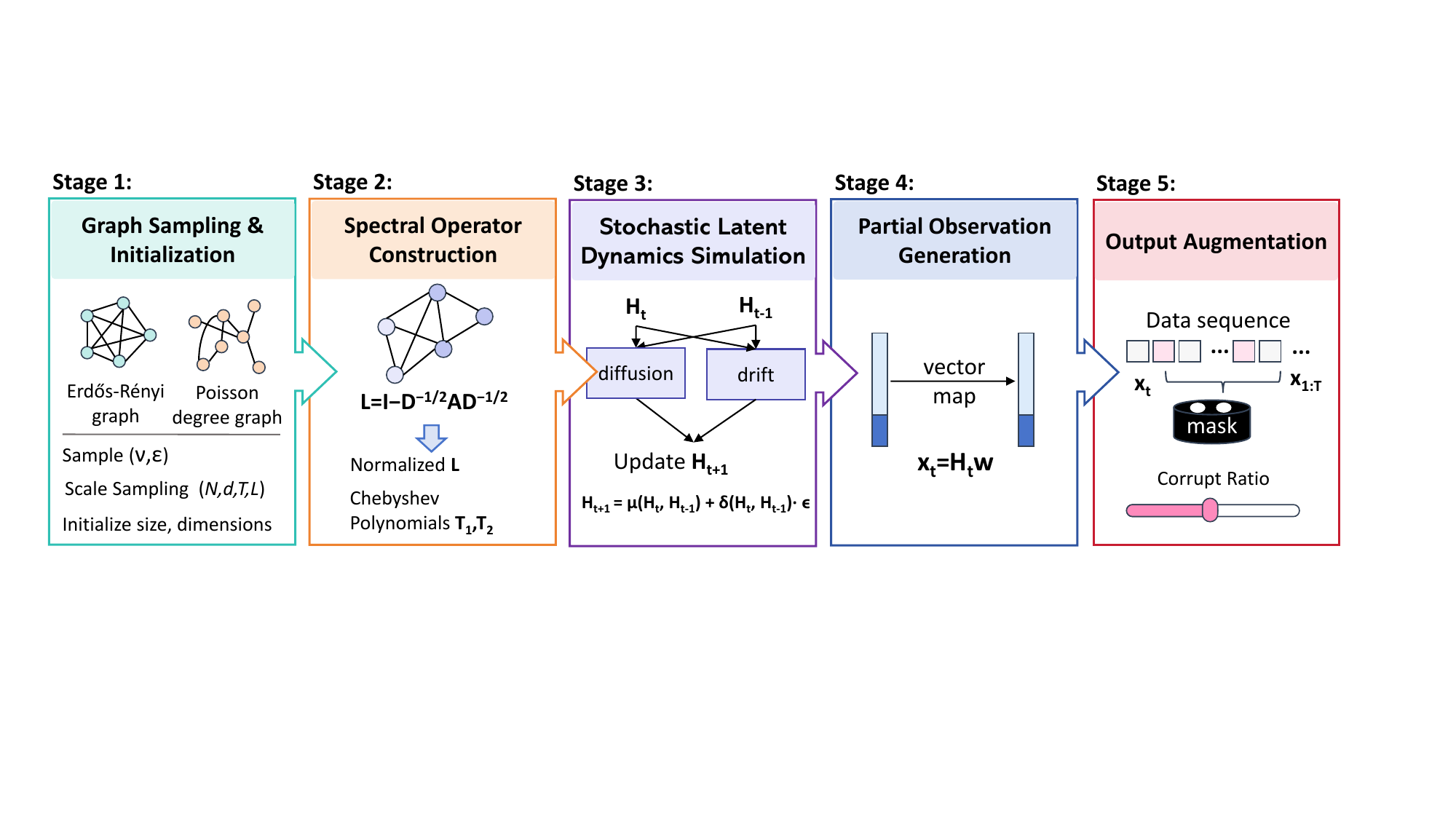}
    \vspace{-15pt}
    \caption{Synthetic data pipeline.}
    \vspace{-5pt}
    \label{fig:data pipeline}
\end{figure}

\noindent \textbf{Graph Sampling \& Operator Construction.}
To promote structural diversity, the graph topology is sampled on-the-fly from random graph families, including Erd\H{o}s–R\'enyi and Poisson degree models. 
Self-loops are added to stabilize the spectrum. 
Given $\mathbf{A}$, we compute the normalized Laplacian
$
\mathbf{L} = \mathbf{I} - \mathbf{D}^{-1/2} \mathbf{A} \mathbf{D}^{-1/2},
$
where $\mathbf{D}$ is the degree matrix. 
To model spatial interactions in a principled manner, we leverage Chebyshev polynomials of the rescaled Laplacian. 
Let $\widehat{\mathbf{L}}$ denote the spectrum-normalized Laplacian, and define
$
\mathbf{T}_1 = \widehat{\mathbf{L}}, \mathbf{T}_2 = 2\widehat{\mathbf{L}}^2 - \mathbf{I}.
$
These operators provide localized yet expressive spectral filters without requiring eigen-decomposition during sequence generation. 
To ensure scale generalization, we dynamically sample system configurations from broad distributions, including graph size $N$, latent dimension $d$, and horizons $(T, L)$, exposing the model to highly heterogeneous spatial scales and temporal regimes.

\noindent \textbf{Stochastic Latent Dynamics.}
The temporal evolution follows a stochastic recurrence:
\begin{equation}
\mathbf{H}_{t+1} 
= a \mathbf{H}_t 
+ \boldsymbol{\mu}(\mathbf{H}_t, \mathbf{H}_{t-1}, \mathbf{A})
+ \boldsymbol{\sigma}(\mathbf{H}_t, \mathbf{H}_{t-1}, \mathbf{A})
\odot \boldsymbol{\epsilon}_t 
+ \mathbf{b},
\end{equation}
where $a \in (0,1)$ controls system dissipation,
$\boldsymbol{\epsilon}_t \sim \mathcal{N}(\mathbf{0}, \mathbf{I})$ introduces stochastic excitation, 
$\mathbf{b}$ is a sample-wise bias term, and $\odot$ denotes element-wise multiplication. The drift term $\boldsymbol{\mu}$ and diffusion term $\boldsymbol{\sigma}$ are parameterized as
\begin{equation}
    \boldsymbol{\mu}(\cdot)
    = \tanh\!\Big(
    [\mathbf{T}_1 \mathbf{H}_t \;\Vert\; \mathbf{T}_2 \mathbf{H}_{t-1}]
    \mathbf{W}_1
    \Big)\mathbf{W}_2,
    \quad
    \boldsymbol{\sigma}(\cdot)=
    \mathrm{sigmoid}\!\Big(
    [\mathbf{T}_1 \mathbf{H}_t \;\Vert\; \mathbf{T}_2 \mathbf{H}_{t-1}]
    \mathbf{W}_3
    \Big),
\end{equation}
where $\mathbf{W}_1,\mathbf{W}_2,\mathbf{W}_3 \in \mathbb{R}^{d_h \times d_h}$ are learnable parameters of the generator and 
$[\cdot \Vert \cdot]$ denotes channel-wise concatenation. 
This formulation couples first- and second-order spectral responses with nonlinear transformations, producing rich spatio-temporal behaviors including diffusion-like smoothing, oscillatory propagation, and stochastic perturbations.
Notably, the generator parameters are randomly initialized for each sample, yielding a distribution over dynamical systems rather than a single fixed simulator. 
Consequently, the pre-training corpus spans a broad manifold of graph-coupled stochastic processes with heterogeneous spectral characteristics and stability regimes.

\noindent \textbf{Observation Generation.} Observable signals are obtained through a random linear projection
$
\mathbf{x}_t = \mathbf{H}_t \mathbf{w},
$
where $\mathbf{w} \in \mathbb{R}^{d}$ is sampled per instance. 
This projection induces partial observability, mimicking real-world sensing processes that capture low-dimensional measurements of higher-dimensional latent states. 
To approximate realistic data imperfections while maintaining full control of the generative mechanism, we randomly mask entries in $\mathbf{X}_{1:T}$ with a controllable ratio. 
Since these corruptions are externally imposed and independent of the underlying dynamics, the model can in principle disentangle intrinsic evolution patterns from observational artifacts.

\noindent \textbf{Output Augmentation.} To bridge the gap between ideal synthetic signals and imperfect real-world measurements, we randomly mask a proportion of the generated observable sequence with missing values. This simulates the missing data ubiquitous in real-world sensor networks. Crucially, while missing data in real-world datasets often exhibits fixed topological biases (e.g., broken sensors at specific locations), 
our random masking strategy enforces an unstructured, unbiased distribution of missingness across both space and time. This prevents the model from relying on superficial, dataset-specific failure topologies and instead forces it to learn robust, position-invariant spatiotemporal dependencies, significantly enhancing its zero-shot transferability to unseen downstream systems.

\section{Model Architecture}
\label{sec:model}

A Spatio-Temporal Foundation Model (STFM) is defined as a parametric function
$f_\theta : (\mathbf{X}_{1:T}, \mathbf{A}) \mapsto \widehat{\mathbf{Y}}_{T+1:T+L},$
where $\theta$ denotes the model parameters, 
$\mathbf{X}_{1:T} \in \mathbb{R}^{T \times N}$ is the historical observation, 
$\mathbf{A} \in \mathbb{R}^{N \times N}$ encodes spatial dependencies, 
and $\widehat{\mathbf{Y}}_{T+1:T+L} \in \mathbb{R}^{L \times N}$ is the predicted future sequence. 
Formally, an STFM should support arbitrary input lengths $T \in \mathbb{N}^+$, 
arbitrary graph sizes $N \in \mathbb{N}^+$, 
and arbitrary forecasting horizons $L \in \mathbb{N}^+$.
NeoST follows a latent-space spatio-temporal modeling paradigm that decouples representation learning, global reasoning, and output adaptation within a unified foundation backbone. 
Figure \ref{fig:model} illustrates the overall architecture of our model.
The model first transforms raw observations into structured patch-level embeddings,
producing a compact latent representation that captures local temporal continuity and node-level semantics. These patch embeddings are then processed by a latent \textbf{Spatio-Temporal Encoder} that jointly models temporal dependencies and graph-aware spatial interactions, yielding context-aware latent states. On top of the encoded representations, a \textbf{Hierarchical Latent Reasoner} generates future trajectory embeddings by performing multi-step global reasoning directly in latent space, enabling the model to construct and refine candidate futures without relying on strictly autoregressive decoding. Finally, a \textbf{Output Adapter} maps latent predictions back to the observation space to produce $\widehat{\mathbf{Y}}_{T+1:T+L}$, supporting both pre-training and tuning. All major components are cascaded through the Hyper-connection \cite{zhu2024hyper}, which preserves information flow across modules while maintaining stable optimization.

\begin{figure}[!h]
    \centering
    \includegraphics[width=\linewidth]{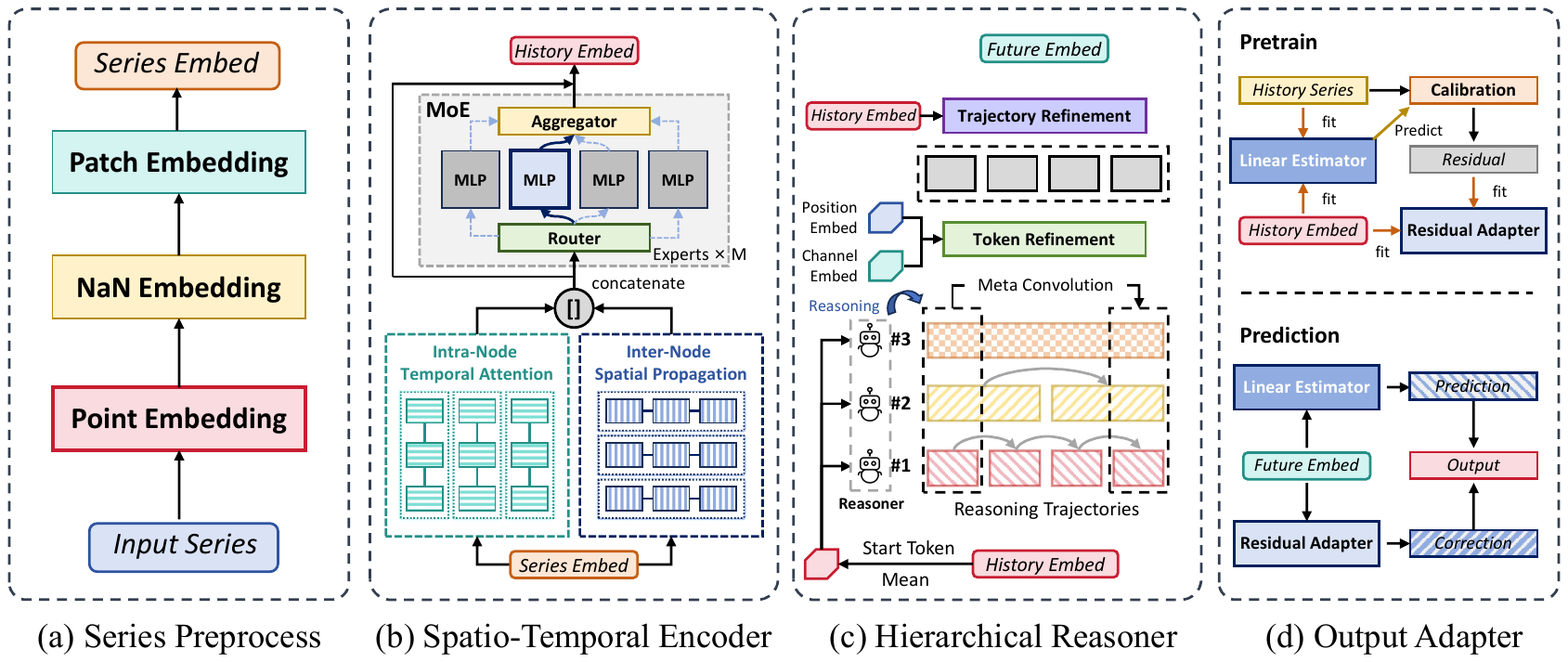}
    \vspace{-12pt}
    \caption{Overall architecture of NeoST.}
    \vspace{-12pt}
    \label{fig:model}
\end{figure}

\subsection{Preprocessing}

Given an input sequence $\mathbf{X}_{1:T} \in \mathbb{R}^{T \times N}$, we first apply instance normalization \cite{kim2021reversible} along the temporal dimension to reduce scale heterogeneity.
For each node $v_i$, we compute the temporal mean $\mu^{(i)}$ and standard deviation $\sigma^{(i)}$, and normalize the signal via $\hat{x}_t^{(i)} = (x_t^{(i)} - \mu^{(i)}) / \tilde{\sigma}^{(i)}$, where $\tilde{\sigma}^{(i)} = \max(\sigma^{(i)}, 1)$ prevents small-variance sequences from being over-amplified. Each normalized scalar observation is then lifted to a $d_p$-dimensional vector through a fixed random linear projection $\mathbf{z}_t^{(i)} = \hat{x}_t^{(i)} \mathbf{w} + \mathbf{b}$, where $\mathbf{w}$ and $\mathbf{b}$ are randomly initialized and kept non-trainable to stabilize optimization, and masked values are mapped to zero vectors. The resulting embeddings are segmented into temporal patches of length $p$ with stride $s$, optionally allowing overlap, and each flattened patch is projected to the hidden dimension $d$ to form patch tokens. 
An optional trend decomposition branch computes first-order differences between adjacent patches and injects the encoded residual signal into the patch representation, enabling sensitivity to short-range dynamics.

\subsection{Spatio-Temporal Encoder}

The spatio-temporal encoder transforms the patch-level representation $\mathbf{Z}_{1:P_T} \in \mathbb{R}^{P_T \times N \times d}$, where $P_T$ denotes the number of patches obtained from patchifying the input sequence, into a contextual latent memory. This is accomplished through alternating between two complementary mechanisms: intra-node temporal attention, which captures temporal evolution within individual nodes, and inter-node spatial propagation, which disseminates information across nodes according to learned relational affinities. The encoder stacks $L_{\mathrm{enc}}$ such layers, with each layer jointly implementing both mechanisms in a unified computation block. 
Critically, the two pathways are designed to be mutually exclusive in their information sources, ensuring that each mechanism contributes a distinct representation.

\noindent \textbf{Intra-node Temporal Attention.} For each node $v_i$, the patch sequence $\mathbf{Z}_{1:P_T}^{(i)} \in \mathbb{R}^{P_T \times d}$ is processed independently by a causal gate self-attention module \cite{qiu2025gated} equipped with Rotary Position Embeddings (RoPE) \cite{su2024roformer}, which encodes relative temporal positions without relying on additive positional encodings. This operation produces two complementary value representations per node $\mathbf{V}^{\mathrm{res}}$ and $\mathbf{V}^{\mathrm{dis}}$. $\mathbf{V}^{\mathrm{res}} \in \mathbb{R}^{P_T \times N \times d}$ captures the self-contained temporal dynamics of each node and serves as a residual pathway that preserves node-specific history. $\mathbf{V}^{\mathrm{dis}} \in \mathbb{R}^{P_T \times N \times d}$ produces temporal features that are subsequently made available for spatial distribution to neighboring nodes. The temporal pathway operates independently per node while the spatial pathway is explicitly prevented from collapsing into identity by masking self-connections, ensuring that each mechanism contributes a distinct and complementary inductive bias to the final representation.

\noindent \textbf{Inter-node Spatial Propagation.} Given the temporally encoded features, the encoder performs spatial information routing through a graph attention mechanism operating across nodes at each patch position. Queries $\mathbf{Q} \in \mathbb{R}^{P_T \times N \times d}$ and keys $\mathbf{K} \in \mathbb{R}^{P_T \times N \times d}$ are computed from the current patch representations, and a node-to-node attention map is derived as
$
    \mathbf{A}_{p} = \mathrm{softmax}\left(a \mathbf{Q}_p \mathbf{K}_p^\top + \mathbf{M}\right) \in \mathbb{R}^{N \times N},
$
where $a=1/\sqrt{d/H}$ is the scalar factor, $H$ is the number of attention heads, $\mathbf{M}$ is a masking matrix with $\mathbf{M}_{ii} = -\infty$ and $\mathbf{M}_{ij} = 0$ for $i \neq j$. The diagonal masking explicitly prohibits each node from attending to itself during spatial aggregation, ensuring that the spatial pathway carries exclusively cross-node information. The attention-weighted aggregation of $\mathbf{V}^{\mathrm{dis}}$ yields a spatially-propagated representation $\mathbf{V}^{\mathrm{agg}} = \mathbf{A}_p \mathbf{V}^{\mathrm{dis}}_p$, which is then modulated by a learned sigmoid gate $\mathbf{g} = \sigma(\mathbf{W}_g \mathbf{Z})$ to adaptively control how much spatial context is incorporated at each position.

\noindent \textbf{Output Combination.} The outputs of both pathways are concatenated and linearly projected to produce the updated representation, followed by a position-wise feed-forward network:
\begin{equation}
    \mathbf{Z}' = \mathbf{W}_o \left[ \mathbf{V}^{\mathrm{res}} \,\|\, \mathbf{V}^{\mathrm{agg}} \odot \mathbf{g} \right] + \mathbf{Z}, \qquad \mathbf{Z}'' = \mathrm{FFN}(\mathbf{Z}') + \mathbf{Z}'.
\end{equation}
This formulation ensures that each node's output integrates both its own temporal trajectory and a gated summary of information propagated from the rest of the graph. The final encoder output $\mathbf{Z}_{1:P_T} \in \mathbb{R}^{P_T \times N \times d}$ serves as the contextual memory that conditions all subsequent latent reasoning. Furthermore, the dense residual pathways embedded within this integration step facilitate stable gradient flow, guaranteeing robust representation learning during the end-to-end pre-training phase.

\subsection{Hierarchical Reasoner}

Given the contextual patch memory $\mathbf{Z}_{1:P_T} \in \mathbb{R}^{P_T \times N \times d}$ produced by the spatio-temporal encoder, the hierarchical reasoner is responsible for generating future trajectory embeddings directly in latent space. Rather than decoding predictions step-by-step in the observation domain, the reasoner operates entirely over patch tokens, producing $\hat{\mathbf{Z}}_{1:P_L} \in \mathbb{R}^{P_L \times N \times d}$, where $P_L$ is the number of output patch tokens corresponding to the target horizon $L$. 
The overall reasoning process follows a coarse-to-fine pipeline: 
An ensemble of multi-scale predictors first generates candidate trajectory hypotheses at different temporal resolutions in latent space. These candidates are then consolidated and refined at both the token level and the trajectory level, enabling the model to reconcile multi-scale information and align its latent predictions with the encoded historical context.

\noindent \textbf{Multi-Scale Latent Prediction.}
To capture future dynamics at multiple temporal resolutions, we instantiate a set of $S$ independent predictors $\{\mathcal{P}_s\}_{s=1}^{S}$, each operating at a distinct temporal stride $\Delta_s$ patch steps. Predictor $\mathcal{P}_s$ maps a global context summary $\bar{\mathbf{z}} \in \mathbb{R}^{N \times d}$ to a sequence of latent states at its native stride and 
truncated to $P_L$ steps, which can be written as 
$
\hat{\mathbf{Z}}^{(s)}_{1:P_L} = \mathcal{P}_s\!\left(\bar{\mathbf{z}},\; P_L\right) \in \mathbb{R}^{P_L \times N \times d}.
$
Here, $\bar{\mathbf{z}} = (1/P)\sum_{p=1}^{P} \mathbf{Z}_p$ condenses the average observed historical context into a compact initialization signal. The $S$ resulting trajectories are concatenated along the feature dimension to form a multi-scale composite representation
$
\hat{\mathbf{Z}}^{\mathrm{cat}}_{1:P_L} = \left[\hat{\mathbf{Z}}^{(1)} \,\Vert\, \hat{\mathbf{Z}}^{(2)} \,\Vert\, \cdots \,\Vert\, \hat{\mathbf{Z}}^{(S)}\right] \in \mathbb{R}^{P_L \times N \times Sd},
$
to which a learnable scale embedding $\mathbf{e}_{\mathrm{ch}} \in \mathbb{R}^{Sd}$ is added to distinguish each scale's contribution prior to fusion. In our implementation, each predictor $\mathcal{P}_s$ is realized as a ChebConv-LSTM \cite{seo2016structuredsequencemodelinggraph} with polynomial order $K$, which propagate spatial dependencies across nodes at every step.

\noindent \textbf{Token-Level Refinement via Conditional Temporal Convolution.}
The multi-scale composite is subsequently fused into a unified $d$-dimensional token sequence. Critically, the appropriate fusion strategy is not universal but depends on the specific input context.
We therefore formulate token-level refinement as a context-conditioned operation, where the fusion weights are dynamically generated from the observed context rather than fixed as shared parameters. A lightweight meta-network $\phi_{\mathrm{meta}}$ takes the global context summary $\bar{\mathbf{z}}$ and produces an instance-specific convolutional kernel
$
\mathbf{W}^{\mathrm{dyn}} = \phi_{\mathrm{meta}}(\bar{\mathbf{z}}) \in \mathbb{R}^{d \times (Sd + d) \times k},
$
where $k$ is the kernel width. This kernel is then applied along the temporal dimension to a augmented input $\left[\hat{\mathbf{Z}}^{\mathrm{cat}} + \mathbf{e}_{\mathrm{ch}} \;\Vert\; \mathbf{E}^{\mathrm{pos}}\right]$, which concatenates the channel embedding $\mathbf{e}_{\mathrm{ch}} \in \mathbb{R}^{Sd}$ with sinusoidal position embeddings $\mathbf{E}^{\mathrm{pos}} \in \mathbb{R}^{P_L \times d}$ to provide positional awareness, yielding $\hat{\mathbf{Z}}^{\mathrm{fused}} \in \mathbb{R}^{P_L \times N \times d}$. The convolution is applied independently for each node, ensuring that the token-level merging of multi-scale signals is modulated by the observed context.

\noindent \textbf{Trajectory-Level Refinement via Encoder-Decoder Reasoning.}
While token-level fusion reconciles scale inconsistencies locally, the resulting trajectory may still exhibit global structural incoherence or misalignment with the historical context. To address this, we apply a two-stage refinement that reasons over the full trajectory sequence jointly. A trajectory encoder $\mathcal{R}_{\mathrm{enc}}$ first processes $\hat{\mathbf{Z}}^{\mathrm{fused}}$ to propagate long-range temporal dependencies across future tokens, after which a trajectory decoder $\mathcal{R}_{\mathrm{dec}}$ grounds the refined trajectory in the observed history $\mathbf{Z}_{1:P_T}$ through cross-attention.
Both $\mathcal{R}_{\mathrm{enc}}$ and $\mathcal{R}_{\mathrm{dec}}$ are instantiated as stacks of transformer layers.
$\mathcal{R}_{\mathrm{enc}}$ employs causal gate self-attention with RoPE to model temporal ordering within the predicted trajectory, while $\mathcal{R}_{\mathrm{dec}}$ performs cross gate attention between the refined trajectory queries and $\mathbf{Z}_{1:P_T}$.

\section{Training \& Inference}
\label{sec:training and inference}
\noindent \textbf{Pre-training Phase.}
Beyond providing ground-truth future values $\mathbf{Y}_{T+1:T+L}$, the engine exposes the underlying latent state vectors $\{\mathbf{H}_t\}_{t=T+1}^{T+L}$ that directly govern the generative dynamics of each synthetic system.
This privileged supervisory signal, unavailable from real-world datasets, forms the cornerstone of our pre-training objective.
Concretely, we flatten $\hat{\mathbf{Z}}^{\mathrm{out}}$ to $\hat{\mathbf{Z}}^{\mathrm{flat}} \in \mathbb{R}^{(P_L N) \times d}$ and apply the Principal Component Analysis (PCA) \cite{wold1987principal} to project it into the $d_h$-dimensional space of the synthetic latent matrix $\mathbf{H} \in \mathbb{R}^{(P_L N) \times d_h}$. The resulting pre-training objective, which we term \textbf{Latent Alignment Loss (LAL)}, interpolates between a latent alignment term and a standard observation-space reconstruction term via a scalar hyperparameter $\alpha \in [0,1]$:
\begin{equation}\label{eq:sls}
\mathcal{L}_{\mathrm{LAL}} = (1-\alpha)\,\bigl\|\mathbf{H} - \mathrm{PCA}_{d_h}\!\bigl(\hat{\mathbf{Z}}^{\mathrm{flat}}\bigr)\bigr\|_F^2 + \alpha\,\bigl\|\mathbf{Y}_{T+1:T+L} - \hat{\mathbf{Y}}_{T+1:T+L}\bigr\|_F^2.
\end{equation}
By placing the primary supervisory weight on the latent space, $\mathcal{L}_{\mathrm{LAL}}$ steers NeoST toward internalizing the structural and dynamical properties of spatio-temporal systems rather than overfitting to the superficial statistical patterns of any particular observation domain.

\noindent \textbf{Test-Time Adaptation.}
Let $\tilde{\mathbf{Z}}^{\mathrm{enc}} \in \mathbb{R}^{T \times N \times d_s}$ denote the time-resolved encoder embeddings obtained by depatchifying the encoder output, where $d_s = d/p$ is the per-timestep embedding dimension and $p$ is the patch length. Collecting all spatiotemporal tokens that are not masked or missing, we form the feature matrix $\mathbf{Z}_{\mathrm{obs}} \in \mathbb{R}^{M \times d_s}$ and the corresponding observation vector $\mathbf{v}_{\mathrm{obs}} \in \mathbb{R}^{M}$, where $M$ is the number of valid observations. The adapter solves the ridge regression problem by utilizing the following equation:
$
\mathbf{w}^{*}, b^{*} = \mathop{\arg\min}_{\mathbf{w},\, b}\;\bigl\|\mathbf{Z}_{\mathrm{obs}}\mathbf{w} + b\,\mathbf{1} - \mathbf{v}_{\mathrm{obs}}\bigr\|^2 + \lambda\|\mathbf{w}\|^2.
$
The equation admits the closed-form solution $\mathbf{w}^{*} = (\mathbf{Z}_{\mathrm{obs}}^{\top}\mathbf{Z}_{\mathrm{obs}} + \lambda\mathbf{I})^{-1}\mathbf{Z}_{\mathrm{obs}}^{\top}\tilde{\mathbf{v}}_{\mathrm{obs}}$, where $\tilde{\mathbf{v}}_{\mathrm{obs}}$ denotes the bias-corrected target. Denoting the depatchified future embeddings as $\tilde{\mathbf{Z}}^{\mathrm{out}} \in \mathbb{R}^{L \times N \times d_s}$, the predicted future sequence is obtained by applying the fitted mapping
$
\hat{\mathbf{Y}}_{T+1:T+L} = \tilde{\mathbf{Z}}^{\mathrm{out}}_{\mathrm{flat}}\,\mathbf{w}^{*} + b^{*},
$
where $\tilde{\mathbf{Z}}^{\mathrm{out}}_{\mathrm{flat}} \in \mathbb{R}^{(LN) \times d_s}$ is the reshaped future embedding matrix.

\section{Experiments}
\label{sec:experiments}

In our experiments, we aim to address the following research questions (RQs):
\vspace{-2pt}
\begin{itemize}[leftmargin=*]
    \item \textbf{(RQ1)}: How effectively does NeoST transfer to diverse unseen real-world systems?
    \item \textbf{(RQ2)}: How does each component and hyperparameter in NeoST contribute to model performance?
    \item \textbf{(RQ3)}: How efficient is NeoST compared to other foundation models?
    \item \textbf{(RQ4)}: How can synthetic spatio-temporal data enhance the generalization ability of models?
\end{itemize}

\subsection{Experimental Setups} 
\textbf{Datsets \& Baselines.} NeoST's pre-training utilizes the synthetic data introduced in Section \ref{sec:data}. For testing, we selected six classic spatio-temporal datasets for evaluation: PEMS03/04/07/08 \cite{guo2021learning}, PEMS-BAY \& METR-LA \cite{li2018dcrnn_traffic}. 
We compare NeoST with the following models: 
\vspace{-2pt}
\begin{itemize}[leftmargin=*]
    \item \textbf{Spatio-Temporal Foundation Models (STFMs)}: FactoST \cite{zhonglearning}, OpenCity \cite{li2024opencity}, UniST \cite{yuan2024unist}
    \item \textbf{Time Series Foundation Models (TSFMs)}: ROSE \cite{wang2024rose}, TimesFM \cite{Das2024TimesFM}, Moirai \cite{liu2025moirai};
\end{itemize}
\vspace{-2pt}
We adopted Mean Absolute Error (MAE) and Root Mean Square Error (RMSE) as evaluation metrics.

\noindent \textbf{Implementation Details.} 
All experiments are conducted on a single NVIDIA A100 80GB GPU paired with an Intel Xeon Platinum 8358P CPU (32C @ 2.60GHz). During the synthetic data generation process, the number of nodes and the latent dimension for each spatio-temporal graph are uniformly sampled from the ranges $[64, 512]$ and $[4, 16]$, respectively, producing 1,000 distinct samples per epoch. The core model architecture is configured with a hidden dimension of 96 and 4 attention heads. The spatio-temporal encoder stacks 4 layers, while the trajectory refinement module is structured with a 2-layer encoder and a 2-layer decoder. The hyper-connections throughout the network maintain an expansion rate of 4. We pre-train the model using AdamW \cite{loshchilov2017decoupled} with a batch size of 8 for 15 epochs. The other experimental settings are the same as those in \citet{zhonglearning}.

\subsection{Model Comparison (RQ1)}

We evaluate the models on six established spatio-temporal datasets in the zero-shot scenario, with the results summarized in Table \ref{tab:zero_shot_comparison}. NeoST consistently surpasses both spatio-temporal and time series foundation models, achieving the lowest MAE and RMSE in ten out of the twelve evaluated settings. Our model demonstrates particularly substantial improvements in challenging long-horizon scenarios of 96 steps, lowering the MAE by significant margins on datasets such as PEMS-03, PEMS-04, PEMS-07, and METR-LA compared to the strongest baselines. This sustained long-term accuracy validates the effectiveness of our latent reasoning paradigm in mitigating the error accumulation typically observed in conventional autoregressive architectures. Although FactoST secures a marginal lead on the short horizon of PEMS-Bay and TimesFM performs slightly better on the long horizon of PEMS-08, NeoST remains highly competitive in these instances while dominating the overall benchmark. Crucially, these strong zero-shot results confirm that pre-training exclusively on diverse synthetic data successfully circumvents the distributional biases of real-world datasets.
\newpage
\begin{table*}[h!]
  \centering
  \caption{Quantitative comparison of zero-shot forecasting performance. Results are reported as MAE / RMSE, where lower values indicate better predictive performance. The best and second-best results are highlighted in \boldres{red} and \secondres{blue}, respectively. Datasets marked with an asterisk (*) were included in the pre-training corpora of specific baselines and are therefore excluded from the ranking.}
  \setlength{\tabcolsep}{4.5pt}
  \footnotesize
  \renewcommand{\arraystretch}{1.0}
  \resizebox{\linewidth}{!}{
    \begin{tabular}{lc|c|ccc|ccc}
      \toprule
      \multicolumn{2}{c|}{\textbf{Model Type}} 
        & \multicolumn{1}{c|}{\textbf{Ours}}
        & \multicolumn{3}{c|}{\textbf{STFMs}}
        & \multicolumn{3}{c}{\textbf{TSFMs}} \\
      \cmidrule{1-2}\cmidrule{3-3}\cmidrule{4-6}\cmidrule{7-9}
      \textbf{Dataset} & \textbf{Horizon}
        & \textbf{NeoST}
        & \textbf{FactoST}
        & \textbf{OpenCity}
        & \textbf{UniST}
        & \textbf{ROSE}
        & \textbf{TimesFM}
        & \textbf{Moirai} \\
      \midrule
      \multirow{2}{*}{PEMS-03}
       & 12→12 & \boldres{25.16} / \boldres{34.90} & 30.12 / \secondres{46.92} & 30.37 / 47.49 & 101.87 / 129.94 & 29.74 / 47.27 & \secondres{29.71} / 49.21 & 28.23* / 46.36* \\
       & 96→96 & \boldres{84.49} / \boldres{107.85} & 113.62 / 144.80 & 125.18 / 159.23 & \secondres{102.87} / \secondres{137.90} & 122.93 / 158.76 & 108.59 / 152.74 & 74.85* / 110.78* \\
      \midrule
      \multirow{2}{*}{PEMS-04}
       & 12→12 & \boldres{30.98} / \boldres{43.63} & 38.65 / 56.59 & 39.34* / 58.50* & 67.91 / 87.63 & 38.35 / 57.34 & \secondres{35.00} / \secondres{53.27} & 34.65* / 52.13* \\
       & 96→96 & \boldres{105.25} / \boldres{137.35} & 142.11 / 175.99 & 153.18* / 188.95* & \secondres{115.76} / \secondres{153.58} & 151.39 / 189.29 & 127.39 / 171.13 & 105.95* / 141.40* \\
      \midrule
      \multirow{2}{*}{PEMS-Bay}
       & 12→12 & \secondres{2.67} / \secondres{4.80} & \boldres{2.02} / \boldres{4.59} & 3.23* / 6.91* & 14.89 / 16.92 & 2.72 / 6.41 & 6.59 / 14.99 & 1.97* / 4.69* \\
       & 96→96 & \boldres{4.86} / \boldres{8.14} & \secondres{6.12} / \secondres{10.47} & 6.92* / 11.79* & 9.29 / 12.72 & 7.02 / 11.90 & 14.16 / 24.01 & 6.51* / 11.70* \\
      \midrule
      \multirow{2}{*}{PEMS-07}
       & 12→12 & \boldres{35.70} / \boldres{49.48} & 45.47 / 66.68 & 45.18 / 67.15 & 104.93 / 133.65 & 44.79 / 67.33 & \secondres{42.10} / \secondres{64.59} & 35.64* / 50.25* \\
       & 96→96 & \boldres{144.18} / \boldres{175.65} & 156.07 / 190.59 & 172.20 / 211.25 & \secondres{144.67} / \secondres{184.62} & 172.07 / 212.94 & 151.99 / 205.18 & 125.91* / 169.43* \\
      \midrule
      \multirow{2}{*}{PEMS-08}
       & 12→12 & \boldres{26.28} / \boldres{36.42} & 32.14 / 47.27 & 32.45* / 48.42* & 73.46 / 93.14 & 31.70 / 47.77 & \secondres{29.68} / \secondres{45.18} & 38.23* / 53.12* \\
       & 96→96 & \secondres{94.03} / \boldres{121.87} & 121.43 / 151.59 & 128.48* / 161.75* & 104.77 / 136.21 & 129.05 / 163.15 & \boldres{92.72} / \secondres{126.10} & 119.10* / 151.83* \\
      \midrule
      \multirow{2}{*}{METR-LA}
       & 12→12 & \boldres{4.90} / \boldres{8.21} & 5.98 / 14.26 & 4.30* / 8.37* & 24.33 / 29.31 & 6.05 / 14.36 & 6.59 / 14.99 & \secondres{5.55} / \secondres{13.79} \\
       & 96→96 & \boldres{8.31} / \boldres{13.70} & \secondres{12.68} / \secondres{18.82} & 10.85* / 17.60* & 25.88 / 30.19 & 12.96 / 23.61 & 14.16 / 24.01 & 12.87 / 22.19 \\
      \bottomrule
    \end{tabular}
  }
  \label{tab:zero_shot_comparison}
  \vspace{-10pt}
\end{table*}

\subsection{Ablation and Hyperparameter Analysis (RQ2)}
To rigorously validate the individual contributions of NeoST's core components, we conduct a comprehensive ablation study. Specifically, we evaluate the zero-shot forecasting Mean Absolute Error (MAE) under both short-term (12 steps) and long-term (96 steps) horizons on the PEMS03 and PEMS07 datasets. This analysis systematically ablates critical architectural modules, examines the sensitivity of the latent alignment objective by varying the trade-off hyperparameter $\alpha \in [0, 1]$ (Figure \ref{fig:ablation1_rq2}), and isolates the empirical efficacy of the test-time adaptation mechanism (Figure \ref{fig:ablation2_rq2}).
\begin{figure}[!h]
    \centering
    \includegraphics[width=\linewidth]{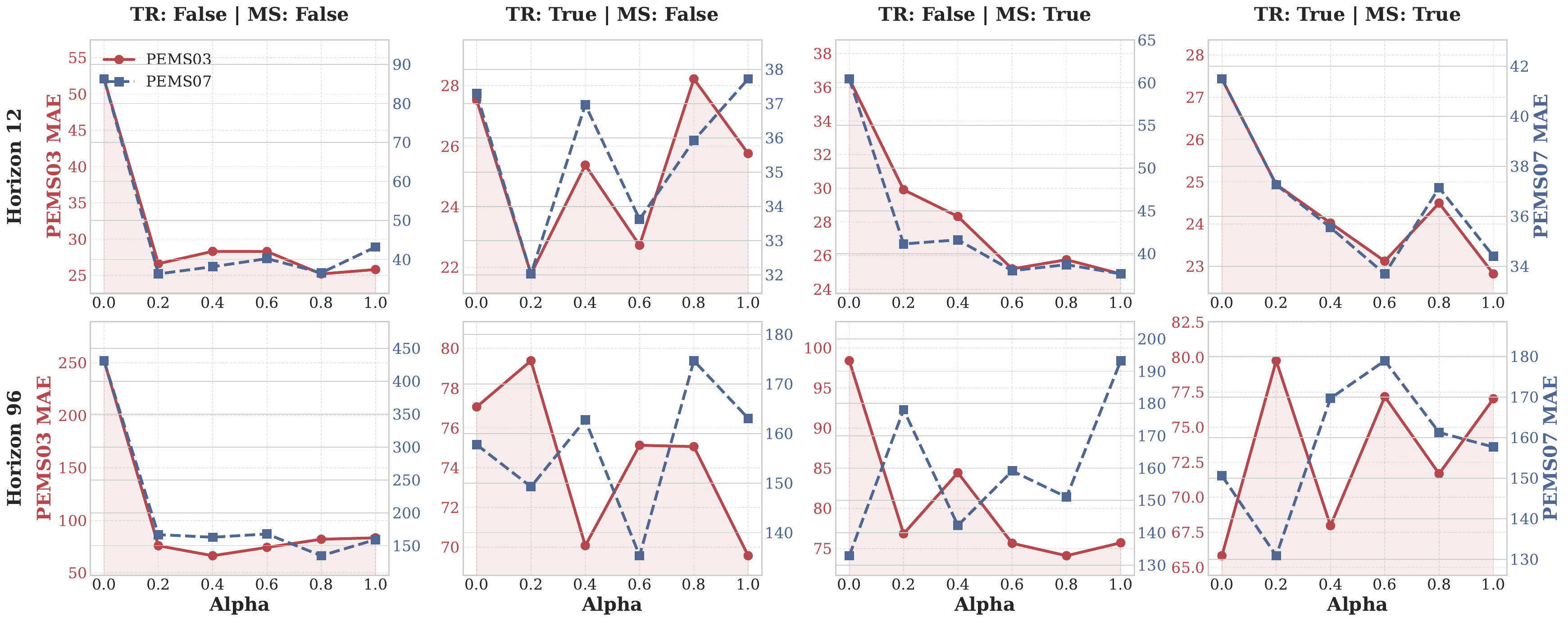}
    \vspace{-15pt}
    \caption{Ablation and hyperparameter analysis.}
    \vspace{-10pt}
    \label{fig:ablation1_rq2}
\end{figure}
\begin{itemize}[leftmargin=*]
    \item \textbf{Architectural Contributions:} The full NeoST framework attains the lowest MAE and exhibits superior predictive stability across most evaluated scenarios. Specifically, disabling multi-scale prediction significantly impairs forecasting robustness over long-term horizons, whereas omitting the trajectory refinement module results in a substantial increase in cumulative structural errors.
    \item \textbf{Objective Trade-off ($\alpha$):} Model predictive performance demonstrates high sensitivity to the balance between latent and observation-space objectives. We find that relying exclusively on latent alignment ($\alpha=0.0$) results in suboptimal error rates, while pure observation-space optimization ($\alpha=1.0$) restricts robust generalization under distributional shifts. The optimal balance consistently resides within $\alpha \in [0.2, 0.6]$, validating our proposed synergistic dual-objective design.
    \item \textbf{Test-Time Adaptation (TTA):} As shown in Figure  \ref{fig:ablation2_rq2}, disabling the TTA module consistently degrades forecasting accuracy across  most scenarios. The performance drop is particularly pronounced in both short and long horizons on PEMS07 and PEMS08. This confirms that leveraging inference-time observation context is crucial for mitigating distribution shifts in zero-shot transfer.
\end{itemize}

\begin{figure}[!h]
    \centering
    \includegraphics[width=\linewidth]{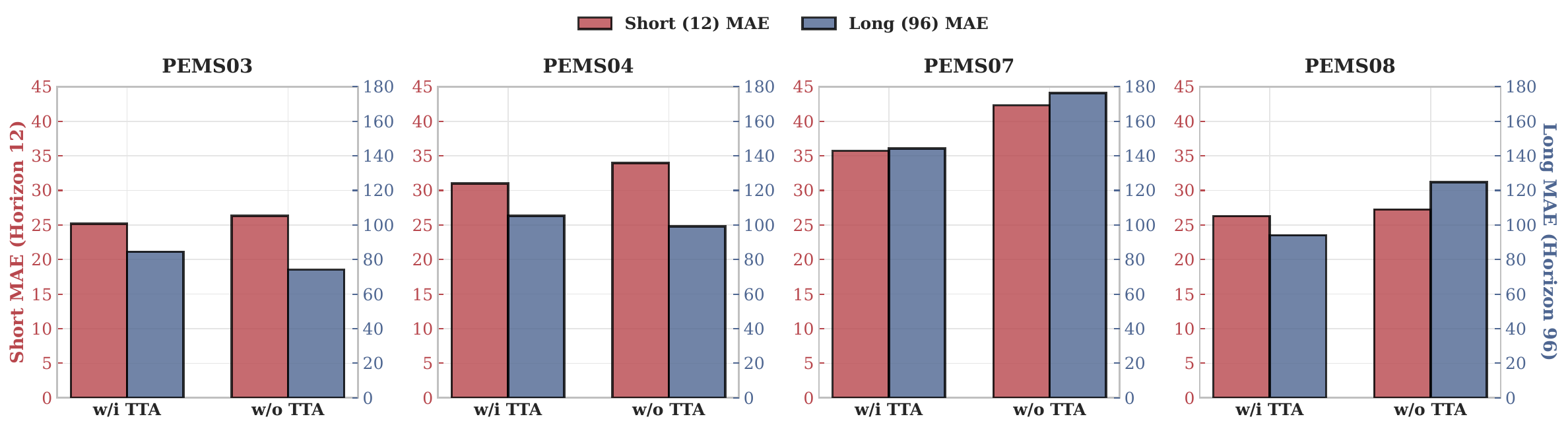}
    \vspace{-15pt}
    \caption{Impact of Test-Time Adaptation (TTA).}
    \vspace{-15pt}
    \label{fig:ablation2_rq2}
\end{figure}

\newpage

\subsection{Inference Efficiency (RQ3)}

\begin{wrapfigure}{l}{0.5\linewidth}
    \centering
    \vspace{-13pt}
    \includegraphics[width=\linewidth]{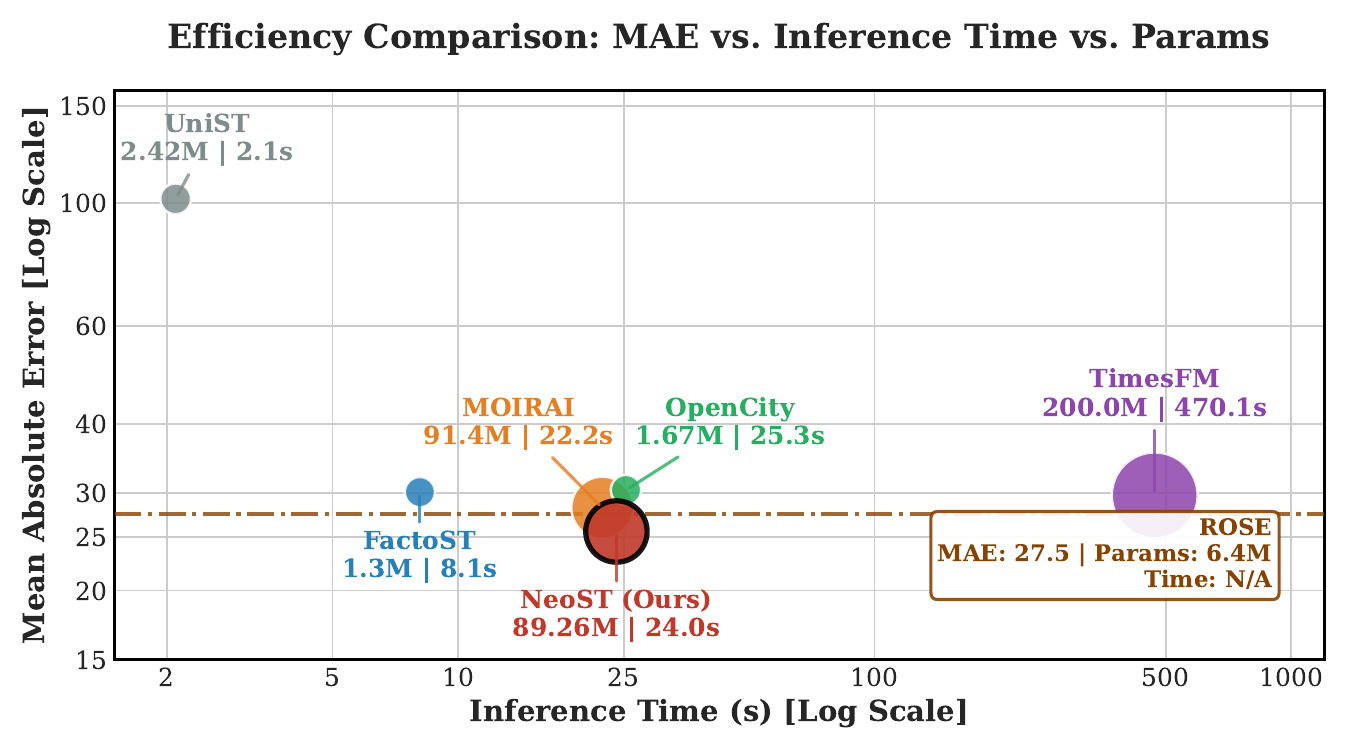}
    \vspace{-15pt}
    \caption{Efficiency comparison.}
    \vspace{-10pt}
    \label{fig:efficiency}
\end{wrapfigure}

To answer RQ3, we evaluate the computational efficiency of NeoST by comparing its inference time, parameter scale, and predictive performance (MAE) against representative baselines on PEMS03 dataset, as illustrated in Figure \ref{fig:efficiency}. NeoST achieves the lowest MAE among all evaluated models, demonstrating superior predictive accuracy. While lightweight models like UniST (2.1s) and FactoST (8.1s) exhibit faster inference speeds, they suffer from significantly higher prediction errors. Conversely, massive models such as TimesFM (200.0M params, 470.1s) incur prohibitive computational costs without outperforming our approach. Operating with 89.26M parameters and a 24.0s inference time, NeoST maintains a computational footprint comparable to MOIRAI (22.2s) and OpenCity (25.3s) but delivers substantially better forecasting accuracy. 
These results establish that NeoST achieves a favorable trade-off between reasoning efficiency and spatio-temporal modeling performance.

\subsection{Manifold Coverage of Synthetic Data (RQ4)}
\begin{wrapfigure}{r}{0.45\linewidth}
    \centering
    \vspace{-18pt}
    \includegraphics[width=\linewidth]{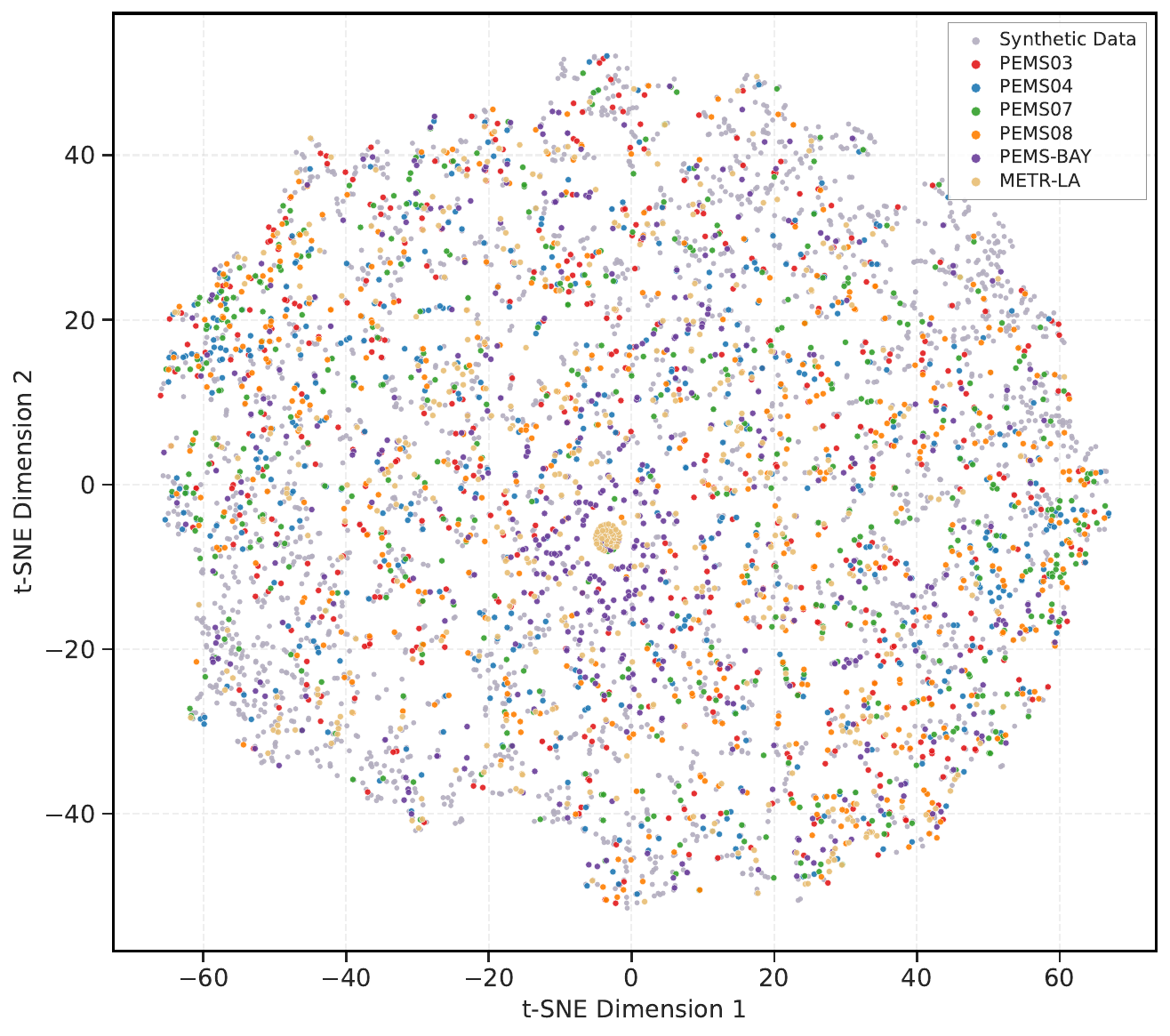}
    \vspace{-18pt}
    \caption{t-SNE visualization.}
    \vspace{-15pt}
    \label{fig:visualization}
\end{wrapfigure}
To address RQ4, we visualize the representational manifold of our synthetic corpus against six real-world benchmarks using t-SNE. After applying instance normalization to fixed-length temporal sequences (12 steps) from all datasets to isolate intrinsic temporal dynamics, we projected these features into a 2D space. As illustrated in Figure \ref{fig:visualization}, the synthetic data forms an expansive, continuous manifold that comprehensively envelopes the localized clusters of all real-world test sets. 
This empirical evidence validates our core hypothesis: pre-training exclusively on this highly diverse synthetic manifold equips the foundation model with universal structural priors, mitigating real-world data biases and enabling robust zero-shot generalization.

\section{Conclusion \& Future Works}

In this work, we introduced NeoST, a novel Spatio-Temporal Foundation Model that overcomes the limitations of biased real-world datasets and computationally expensive generation paradigms. By pre-training exclusively on a procedurally generated synthetic corpus, NeoST effectively mitigates distributional biases and measurement artifacts. 
Extensive experiments across diverse benchmarks demonstrate that NeoST achieves superior generalization, long-horizon stability, and inference efficiency in diverse real-world spatio-temporal systems.
For future work, we plan to scale the synthetic generation engine to incorporate more complex physical laws and non-stationary dynamics.

\newpage

\bibliographystyle{IEEEtranN}
\bibliography{references.bib}

\newpage
\appendix
\section{Related Works}
\label{appendix:related works}

\paragraph{Synthetic Spatio-Temporal Data.} 
The use of synthetic data for time series and spatio-temporal modeling has gained increasing attention, particularly in the era of foundation models.
A comprehensive survey is provided by \citet{liu2025empowering}, which highlights procedural and simulation-based generation as scalable alternatives to real-world data collection.
In contrast, most existing spatio-temporal models are trained on a limited set of real-world benchmarks, predominantly from transportation and urban domains, such as PEMS \cite{guo2021learning}, LargeST \cite{liu2023largest}, SINPA \cite{zhang2024predicting}, and LaDe \cite{wu2024lade}. While influential, these datasets share similar sensing infrastructures and domain biases, restricting distributional diversity.
Extending synthesis to the spatio-temporal domain introduces additional complexity, as models must jointly capture spatial dependencies and temporal dynamics. GAN-based approaches \cite{gao2022generative} generate realistic mobility traces or sensor readings but are typically domain-specific. PDE-based simulators \cite{arndt2025synthetic} construct synthetic spatio-temporal graphs governed by physical dynamics, offering improved controllability. Diffusion models for trajectory \cite{zhu2023difftraj,zhu2024controltraj} and time series \cite{xia2025causal} generation have also been explored.
Despite these efforts, synthetic data is rarely used as the \textit{primary} source for STFM pre-training. Most existing STFMs still rely heavily on aggregated real-world datasets, inevitably inheriting their inherent domain biases and persistent data quality limitations.

\paragraph{Training Paradigms and Objectives for Foundation Models.} 
Traditional point-wise regression losses in the observation space often struggle to capture global structural similarities or temporal dynamics, prompting the use of specialized alignment metrics~\cite{jadon2024comprehensive, cuturi2017soft, le2019shape}. In parallel, Self-Supervised Learning (SSL)~\cite{zhang2024self} has transformed representation learning. A particularly promising evolution is the Joint-Embedding Predictive Architecture (JEPA)~\cite{assran2023self, balestriero2025lejepa}. By forcing models to predict missing representations entirely within an abstract latent space rather than reconstructing raw values, JEPA inherently discards high-frequency noise and focuses on semantic features. This paradigm has recently achieved notable success in spatio-temporal tasks, such as trajectory representation learning via T-JEPA~\cite{li2024t} and HiT-JEPA~\cite{li2025hit}. These developments conceptually echo our proposed Latent Alignment Loss (LAL) in NeoST, reinforcing the necessity of latent-space reasoning for complex dynamics.
Furthermore, the shift toward foundation-level modeling is fundamentally intertwined with neural scaling laws. Recent empirical studies demonstrate that TSFMs adhere to predictable scaling behaviors, where performance consistently improves alongside increases in model parameters and dataset size~\cite{yao2024towards, shi2024scaling, edwards2024scaling}. By coupling a highly scalable synthetic data engine with efficient latent-space reasoning, NeoST is inherently positioned to exploit these scaling principles, circumventing the finite scale and inherent biases of real-world spatio-temporal datasets.

\section{Theoretical Properties of Synthetic Data}
\label{appendix:theory}

This appendix characterizes the dynamical properties induced by the proposed synthetic data
generator. Rather than instantiating a single fixed dynamical law, the generator samples graph
topologies, generator parameters, biases, observation maps, and masking patterns, thereby defining a
distribution over stochastic graph-coupled dynamical systems. Consequently, the synthetic corpus can
contain a broad range of spatio-temporal regimes, including spatial dependence, temporal
autocorrelation, delayed feedback, diffusion-like smoothing, wave-like propagation, quasi-periodic
responses, nonlinear saturation, heteroscedastic stochasticity, and partial observability.

Let $\Theta$ denote the sampled system configuration for one synthetic instance. Unless otherwise
stated, all statements below are conditioned on a fixed $\Theta$.

\paragraph{Distribution over dynamical systems.}
Since each synthetic instance samples its own graph, generator weights, bias, and observation
projection, the overall data distribution can be written as
\begin{equation}
p(\mathbf X)=\int p_{\Theta}(\mathbf X)\,d\nu(\Theta),
\end{equation}
where $\nu$ is the sampling distribution over system configurations. This mixture interpretation is
central to the design of the generator: different sampled systems may exhibit different spatial
scales, temporal responses, stochastic intensities, nonlinear regimes, and observation patterns.
Thus, the corpus covers a heterogeneous family of structured spatio-temporal processes rather than
trajectories from a single simulator.

\begin{proposition}[Conditional second-order Markov structure]
\label{prop:conditional_markov}
For a fixed sampled configuration $\Theta$, the latent process satisfies
\begin{equation}
p(\mathbf H_{t+1}\mid \mathbf H_t,\mathbf H_{t-1},\ldots,\mathbf H_0,\Theta)
=
p(\mathbf H_{t+1}\mid \mathbf H_t,\mathbf H_{t-1},\Theta).
\end{equation}
Equivalently,
\begin{equation}
\widetilde{\mathbf H}_t=
\begin{bmatrix}
\mathbf H_t\\
\mathbf H_{t-1}
\end{bmatrix}
\end{equation}
forms a first-order time-homogeneous Markov chain conditioned on $\Theta$.
\end{proposition}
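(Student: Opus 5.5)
The proof writes the recurrence as a measurable map of the last two states plus fresh noise and reads off the conditional law. Fix $\Theta$, which contains the graph $\mathbf A$ and hence $\mathbf T_1,\mathbf T_2$, the weights $\mathbf W_1,\mathbf W_2,\mathbf W_3$, the dissipation $a$, the bias $\mathbf b$, and the initial conditions $(\mathbf H_0,\mathbf H_{-1})$ or their law. Define the deterministic map
\[
F_\Theta(\mathbf H,\mathbf H',\boldsymbol\epsilon)=a\mathbf H+\boldsymbol\mu(\mathbf H,\mathbf H',\mathbf A)+\boldsymbol\sigma(\mathbf H,\mathbf H',\mathbf A)\odot\boldsymbol\epsilon+\mathbf b .
\]
Then the recurrence reads $\mathbf H_{t+1}=F_\Theta(\mathbf H_t,\mathbf H_{t-1},\boldsymbol\epsilon_t)$. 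The map $F_\Theta$ is continuous, being a composition of linear maps, $\tanh$ and the sigmoid, so it is Borel measurable.

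\textbf{Step 1 (noise independence).} The noises $\boldsymbol\epsilon_t$ are i.i.d. $\mathcal N(\mathbf 0,\mathbf I)$ and are drawn independently of $\Theta$ and of the initial state. This is the generator's sampling convention and should be stated as such. An induction on $t$ shows that $\mathbf H_0,\dots,\mathbf H_t$ are measurable functions of $(\Theta,\mathbf H_0,\mathbf H_{-1},\boldsymbol\epsilon_0,\dots,\boldsymbol\epsilon_{t-1})$. Hence $\boldsymbol\epsilon_t$ is independent of $\sigma(\mathbf H_0,\dots,\mathbf H_t)$ conditionally on $\Theta$.

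\textbf{Step 2 (conditional law).} Apply the freezing lemma: if $Y$ is $\mathcal G$-measurable and $\boldsymbol\epsilon$ is independent of $\mathcal G$, then $\mathbb E[g(Y,\boldsymbol\epsilon)\mid\mathcal G]=\int g(Y,e)\,\phi(e)\,de$. Take $\mathcal G=\sigma(\mathbf H_0,\dots,\mathbf H_t)$ and a bounded test function $g$. This gives
\[
\mathbb E[g(\mathbf H_{t+1})\mid\mathbf H_{0:t},\Theta]=\int g\bigl(F_\Theta(\mathbf H_t,\mathbf H_{t-1},e)\bigr)\phi(e)\,de ,
\]
which depends on the history only through $(\mathbf H_t,\mathbf H_{t-1})$. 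The same computation conditioning only on $(\mathbf H_t,\mathbf H_{t-1})$ gives the identical right-hand side, so the two conditional laws coincide. Explicitly, the kernel is Gaussian. Where $\boldsymbol\sigma>0$ entrywise, which always holds since the sigmoid is positive, $\mathbf H_{t+1}$ given $(\mathbf H_t,\mathbf H_{t-1})$ is an independent Gaussian in each entry. Its mean is $a\mathbf H_t+\boldsymbol\mu+\mathbf b$ and its variances are $\boldsymbol\sigma^{\odot2}$. This gives a density version of the claimed equality.

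\textbf{Step 3 (lifted chain).} Define $\widetilde F_\Theta(\widetilde{\mathbf H},\boldsymbol\epsilon)=\bigl[F_\Theta(\mathbf H,\mathbf H',\boldsymbol\epsilon);\mathbf H\bigr]$ for $\widetilde{\mathbf H}=[\mathbf H;\mathbf H']$. Then $\widetilde{\mathbf H}_{t+1}=\widetilde F_\Theta(\widetilde{\mathbf H}_t,\boldsymbol\epsilon_t)$. Its transition kernel is
\[
K_\Theta(\widetilde{\mathbf H},\cdot)=\text{law of }\widetilde F_\Theta(\widetilde{\mathbf H},\boldsymbol\epsilon).
\]
This kernel is degenerate in its second block, a Dirac mass at $\mathbf H$, and Gaussian in its first block. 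The first-order Markov property follows from Step 2, since $\sigma(\widetilde{\mathbf H}_{0:t})=\sigma(\mathbf H_{-1:t})$. Time-homogeneity holds because $F_\Theta$ contains no explicit $t$: the parameters $a$, $\mathbf W_i$, $\mathbf b$ and $\mathbf A$ are fixed per instance.

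\textbf{Main obstacle.} The only delicate point is Step 1, namely making the conditional independence precise. Conditioning on $\Theta$ matters because, unconditionally, the mixture over $\nu$ destroys the Markov property: the history is informative about $\Theta$. I would state this caveat explicitly and formalize "conditioned on $\Theta$" via regular conditional distributions, or simply treat $\Theta$ as deterministic. The rest is routine.
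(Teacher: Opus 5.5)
Your proposal is correct and follows the paper's route: the update is a fixed function of $(\mathbf H_t,\mathbf H_{t-1})$ and a fresh innovation $\boldsymbol\epsilon_t$ that is independent of the past. That gives the second-order Markov property, and the augmented state $\widetilde{\mathbf H}_t$ then gives a first-order chain. Your version makes the paper's one-line argument rigorous with the induction-based independence of $\boldsymbol\epsilon_t$ and the freezing lemma, and adds an explicit time-homogeneity argument (no explicit $t$-dependence in $F_\Theta$, i.i.d. noise), a point the paper asserts but does not justify.
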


\begin{proof}
Conditioned on $\Theta$, the one-step update depends on the past only through
$(\mathbf H_t,\mathbf H_{t-1})$ and the fresh innovation $\boldsymbol{\epsilon}_t$, which is
independent of previous states. The augmented-state representation then gives a first-order Markov
chain.
\end{proof}

The conditioning on $\Theta$ is essential. Since generator parameters are sampled per instance, the
marginal process obtained after integrating out $\Theta$ is a mixture of Markov processes and need
not itself be second-order Markov.

\begin{proposition}[Conditionally Gaussian heteroscedastic transition]
\label{prop:conditional_gaussian}
For fixed $\Theta$, the transition distribution satisfies
\begin{equation}
\mathrm{vec}(\mathbf H_{t+1})
\mid
\mathbf H_t,\mathbf H_{t-1},\Theta
\sim
\mathcal N(\mathbf m_t,\mathbf \Sigma_t),
\end{equation}
where
\begin{equation}
\mathbf m_t
=
\mathrm{vec}\!\left(
a\mathbf H_t+
\boldsymbol{\mu}(\mathbf H_t,\mathbf H_{t-1},\mathbf A)
+\mathbf b
\right),
\end{equation}
and
\begin{equation}
\mathbf \Sigma_t
=
\operatorname{diag}
\left(
\mathrm{vec}
\left(
\boldsymbol{\sigma}(\mathbf H_t,\mathbf H_{t-1},\mathbf A)^2
\right)
\right).
\end{equation}
Thus, the latent dynamics are conditionally Gaussian with state-dependent covariance.
\end{proposition}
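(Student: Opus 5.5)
The plan is to read the transition law straight off the recurrence, conditioning on $\Theta$ and on $(\mathbf H_t,\mathbf H_{t-1})$. Fix $\Theta$, which fixes $a$, $\mathbf A$, the Chebyshev operators $\mathbf T_1,\mathbf T_2$, the weights $\mathbf W_1,\mathbf W_2,\mathbf W_3$ and the bias $\mathbf b$. Then also condition on $\mathbf H_t=\mathbf h$ and $\mathbf H_{t-1}=\mathbf h'$.

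Under this conditioning, the quantities
\[
\mathbf c := a\mathbf h+\boldsymbol{\mu}(\mathbf h,\mathbf h',\mathbf A)+\mathbf b,
\qquad
\mathbf s := \boldsymbol{\sigma}(\mathbf h,\mathbf h',\mathbf A)
\]
are deterministic $N\times d_h$ matrices, since they are compositions of fixed linear maps with $\tanh$ and $\mathrm{sigmoid}$. The update then reads $\mathbf H_{t+1}=\mathbf c+\mathbf s\odot\boldsymbol{\epsilon}_t$. By the same independence used in Proposition \ref{prop:conditional_markov}, the fresh innovation $\boldsymbol{\epsilon}_t\sim\mathcal N(\mathbf 0,\mathbf I)$ is independent of $(\mathbf H_t,\mathbf H_{t-1})$ given $\Theta$. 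Therefore, conditioning does not change its law.

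Next we vectorize the update. The Hadamard product becomes a linear map, $\mathrm{vec}(\mathbf s\odot\boldsymbol{\epsilon}_t)=\operatorname{diag}(\mathrm{vec}(\mathbf s))\,\mathrm{vec}(\boldsymbol{\epsilon}_t)$. So
\[
\mathrm{vec}(\mathbf H_{t+1})=\mathrm{vec}(\mathbf c)+\mathbf D\,\mathrm{vec}(\boldsymbol{\epsilon}_t),
\qquad
\mathbf D=\operatorname{diag}(\mathrm{vec}(\mathbf s)).
\]
This is an affine image of a standard Gaussian vector. Hence it is Gaussian with mean $\mathrm{vec}(\mathbf c)=\mathbf m_t$ and covariance $\mathbf D\mathbf I\mathbf D^\top=\mathbf D^2$. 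Because $\mathbf D$ is diagonal, $\mathbf D^2=\operatorname{diag}(\mathrm{vec}(\mathbf s^{2}))=\mathbf\Sigma_t$, where the square is taken entrywise.

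It remains to remark on nondegeneracy and heteroscedasticity. Since the sigmoid takes values in $(0,1)$, every entry of $\mathbf s$ is strictly positive. Thus $\mathbf\Sigma_t$ is positive definite, and the conditional law has a density. The covariance depends on $(\mathbf H_t,\mathbf H_{t-1})$ through $\boldsymbol{\sigma}$, which is exactly the state-dependent covariance claimed.

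There is no real obstacle here. The only point needing care is the conditional independence of $\boldsymbol{\epsilon}_t$ from the past states given $\Theta$, which the generator provides by drawing a fresh innovation at each step. One should also note that the statement holds only conditionally on $\Theta$. Marginally over $\Theta$ the law is a Gaussian mixture, paralleling the remark after Proposition \ref{prop:conditional_markov}.
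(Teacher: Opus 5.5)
Your proposal is correct and follows the same route as the paper. Conditioning on $\Theta$ and $(\mathbf H_t,\mathbf H_{t-1})$ makes the drift and diffusion deterministic, so the only randomness left is $\boldsymbol{\sigma}\odot\boldsymbol{\epsilon}_t$, and entrywise scaling of a standard Gaussian gives the diagonal covariance $\mathbf\Sigma_t$; you simply make explicit what the paper leaves implicit, namely the identity $\mathrm{vec}(\mathbf s\odot\boldsymbol{\epsilon}_t)=\operatorname{diag}(\mathrm{vec}(\mathbf s))\,\mathrm{vec}(\boldsymbol{\epsilon}_t)$, the independence of the fresh innovation from the past states, and positive definiteness from the range of the sigmoid.
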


\begin{proof}
Given $(\mathbf H_t,\mathbf H_{t-1})$ and $\Theta$, the only random component in the update is
\[
\boldsymbol{\sigma}(\mathbf H_t,\mathbf H_{t-1},\mathbf A)\odot\boldsymbol{\epsilon}_t.
\]
Since $\boldsymbol{\epsilon}_t$ has independent standard Gaussian entries, element-wise scaling
yields a diagonal Gaussian covariance.
\end{proof}

Although each one-step transition is conditionally Gaussian, the process is generally not marginally
Gaussian, because both its conditional mean and conditional variance depend nonlinearly on
graph-filtered latent states. This state-dependent variance introduces heteroscedastic
stochasticity, a common feature of real spatio-temporal sensing systems.

\begin{theorem}[Spectrally bounded graph filtering]
\label{thm:chebyshev_stability}
Assume the sampled graph is undirected with nonnegative weights and nonzero degrees. Then
\begin{equation}
\operatorname{spec}(\widehat{\mathbf L})\subset[-1,1],
\end{equation}
and for any Chebyshev polynomial $T_k$,
\begin{equation}
\|T_k(\widehat{\mathbf L})\|_2\leq 1.
\end{equation}
In particular,
\begin{equation}
\|\mathbf T_1\|_2\leq 1,
\qquad
\|\mathbf T_2\|_2\leq 1.
\end{equation}
\end{theorem}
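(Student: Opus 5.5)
The argument has two steps: first locate the spectrum of the normalized Laplacian $\mathbf L$, then transfer the bound to $\widehat{\mathbf L}$ and to Chebyshev polynomials through the spectral theorem.

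\textbf{Step 1: locating the spectrum of $\mathbf L$.} Since the graph is undirected with nonnegative weights, $\mathbf A$ is real symmetric. Since all degrees are nonzero, $\mathbf D^{-1/2}$ exists. Hence $\mathbf S=\mathbf D^{-1/2}\mathbf A\mathbf D^{-1/2}$ and $\mathbf L=\mathbf I-\mathbf S$ are real symmetric, and therefore orthogonally diagonalizable with real eigenvalues. I will show $\operatorname{spec}(\mathbf L)\subset[0,2]$, which is equivalent to $\|\mathbf S\|_2\le 1$.

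The cleanest route is through the quadratic forms. Substitute $\mathbf y=\mathbf D^{-1/2}\mathbf x$. For $\mathbf I-\mathbf S$,
\[
\mathbf x^\top(\mathbf I-\mathbf S)\mathbf x=\tfrac12\sum_{i,j}A_{ij}(y_i-y_j)^2\ge 0.
\]
For $\mathbf I+\mathbf S$,
\[
\mathbf x^\top(\mathbf I+\mathbf S)\mathbf x=\tfrac12\sum_{i,j}A_{ij}(y_i+y_j)^2\ge 0.
\]
Both identities use $d_i=\sum_j A_{ij}$. This is the only place where nonnegativity of the weights enters. Self-loops are harmless: a loop contributes $A_{ii}$ to $d_i$, so both identities still hold. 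Together they give $-1\le\lambda(\mathbf S)\le 1$.

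\textbf{Step 2: transferring the bound to $\widehat{\mathbf L}$ and to $T_k$.} The paper only says that $\widehat{\mathbf L}$ is "spectrum-normalized". I will take the standard Chebyshev rescaling
\[
\widehat{\mathbf L}=\frac{2}{\lambda_{\max}}\mathbf L-\mathbf I .
\]
This is an affine map sending $[0,\lambda_{\max}]$ onto $[-1,1]$, so $\operatorname{spec}(\widehat{\mathbf L})\subset[-1,1]$. If instead the rescaling is $\mathbf L-\mathbf I=-\mathbf S$ (the $\lambda_{\max}\approx 2$ convention), Step 1 gives the same containment directly. The degenerate case $\lambda_{\max}=0$ occurs only for an edgeless graph, where $\mathbf L=0$; I will handle it by convention with $\widehat{\mathbf L}=-\mathbf I$.

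Then write $\widehat{\mathbf L}=\mathbf U\Lambda\mathbf U^\top$ with $\mathbf U$ orthogonal. This gives $T_k(\widehat{\mathbf L})=\mathbf U\,T_k(\Lambda)\,\mathbf U^\top$, which is symmetric. Its spectral norm is therefore
\[
\|T_k(\widehat{\mathbf L})\|_2=\max_i |T_k(\lambda_i)|.
\]
For $\lambda\in[-1,1]$, write $\lambda=\cos\theta$; then $T_k(\lambda)=\cos(k\theta)$, so $|T_k(\lambda)|\le 1$. Since every $\lambda_i$ lies in $[-1,1]$, the bound $\|T_k(\widehat{\mathbf L})\|_2\le 1$ follows. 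The special cases $\mathbf T_1=T_1(\widehat{\mathbf L})=\widehat{\mathbf L}$ and $\mathbf T_2=T_2(\widehat{\mathbf L})=2\widehat{\mathbf L}^2-\mathbf I$ match the paper's definitions, so $\|\mathbf T_1\|_2\le1$ and $\|\mathbf T_2\|_2\le1$.

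\textbf{Main obstacle.} The genuinely substantive step is the spectral bound $\|\mathbf S\|_2\le1$ in Step 1; everything after it is routine functional calculus. A secondary point of care is pinning down which definition of $\widehat{\mathbf L}$ the paper intends, and the argument above is written to cover both standard choices.
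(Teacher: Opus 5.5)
Your proof is correct and is essentially the paper's argument. The paper sets $\widehat{\mathbf L}=\mathbf L-\mathbf I$, cites the standard fact $\operatorname{spec}(\mathbf L)\subset[0,2]$ without proof, and then uses the symmetry of $\widehat{\mathbf L}$ together with $|T_k|\le 1$ on $[-1,1]$; this is exactly your Step 2 under your second convention. Your quadratic-form derivation of $\|\mathbf D^{-1/2}\mathbf A\mathbf D^{-1/2}\|_2\le 1$ supplies the step the paper only cites, and covering the $\tfrac{2}{\lambda_{\max}}\mathbf L-\mathbf I$ convention is a sensible hedge. One minor nit: under the nonzero-degree hypothesis, $\lambda_{\max}=0$ means the graph has only self-loops rather than being edgeless, though your convention handles that case either way.
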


\begin{proof}
For an undirected graph with nonnegative weights, the normalized Laplacian has spectrum in
$[0,2]$. Hence $\widehat{\mathbf L}=\mathbf L-\mathbf I$ has spectrum in $[-1,1]$. Since
$|T_k(x)|\leq 1$ for $x\in[-1,1]$, symmetry of $\widehat{\mathbf L}$ gives
\[
\|T_k(\widehat{\mathbf L})\|_2
=
\max_{\lambda\in\operatorname{spec}(\widehat{\mathbf L})}
|T_k(\lambda)|
\leq 1.
\]
\end{proof}

Theorem~\ref{thm:chebyshev_stability} concerns the stability of the graph filtering operators. At
the same time, $\mathbf T_1\mathbf H_t$ and $\mathbf T_2\mathbf H_{t-1}$ explicitly couple
neighboring or spectrally related nodes. Therefore, spatial dependence, graph smoothness, and
localized propagation are structural properties induced by the generator.

\begin{theorem}[Dissipative moment control]
\label{thm:moment_control}
Fix a sampled configuration $\Theta$. If the initial latent states have finite second moments, then
there exists $C_{\Theta}<\infty$ such that
\begin{equation}
\sup_{t\geq 0}
\mathbb E_{\Theta}\|\mathbf H_t\|_F^2
\leq
C_{\Theta}.
\end{equation}
\end{theorem}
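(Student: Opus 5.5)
The plan is to exploit the fact that every non-dissipative term in the recurrence is uniformly bounded, and then unroll a scalar linear recursion for the second moment. Conditioned on $\Theta$, the drift satisfies
\[
\|\boldsymbol{\mu}(\mathbf H_t,\mathbf H_{t-1},\mathbf A)\|_F \le \|\tanh(\cdot)\|_F\,\|\mathbf W_2\|_2 \le \sqrt{N d_h}\,\|\mathbf W_2\|_2 =: M_\mu ,
\]
because $|\tanh|\le 1$ entrywise and $\tanh(\cdot)$ has $N$ rows and $d_h$ columns. (We read $\mathbf W_1\in\mathbb R^{2d_h\times d_h}$, so that the concatenation $[\cdot\Vert\cdot]$ is compatible.) In the same way, the sigmoid gives $0\le \boldsymbol{\sigma}\le 1$ entrywise, so $\|\boldsymbol{\sigma}\odot\boldsymbol{\epsilon}_t\|_F\le \|\boldsymbol{\epsilon}_t\|_F$. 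The bias satisfies $\|\mathbf b\|_F=:M_b<\infty$ for fixed $\Theta$. None of these bounds needs Theorem~\ref{thm:chebyshev_stability}: the saturating nonlinearities already remove any dependence on the size of the graph-filtered states. We only invoke that theorem to note that $\boldsymbol{\mu}$ and $\boldsymbol{\sigma}$ are well defined, with inputs controlled by $\|\mathbf H_t\|_F$ and $\|\mathbf H_{t-1}\|_F$.

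Next, write $\mathbf H_{t+1}=a\mathbf H_t+\mathbf F_t+\boldsymbol{\xi}_t$, where $\mathbf F_t=\boldsymbol{\mu}+\mathbf b$ is $\mathcal F_t$-measurable with $\|\mathbf F_t\|_F\le M:=M_\mu+M_b$, and $\boldsymbol{\xi}_t=\boldsymbol{\sigma}\odot\boldsymbol{\epsilon}_t$. By Proposition~\ref{prop:conditional_gaussian}, $\mathbb E[\boldsymbol{\xi}_t\mid\mathcal F_t]=0$ and $\mathbb E[\|\boldsymbol{\xi}_t\|_F^2\mid\mathcal F_t]\le N d_h$. Conditioning on $\mathcal F_t$ kills the cross terms with $\boldsymbol{\xi}_t$, which leaves
\[
\mathbb E\|\mathbf H_{t+1}\|_F^2 \le \mathbb E\|a\mathbf H_t+\mathbf F_t\|_F^2 + N d_h .
\]
To bound the first term, use Young's inequality in the form $\|x+y\|^2\le (1+\eta)\|x\|^2+(1+1/\eta)\|y\|^2$. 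Choose $\eta>0$ with $\rho:=(1+\eta)a^2<1$, which is possible since $a\in(0,1)$; for instance $\eta=(1-a^2)/(2a^2)$ gives $\rho=(1+a^2)/2$. This yields
\[
m_{t+1}\le \rho\, m_t + K,\qquad K=(1+1/\eta)M^2+N d_h ,
\]
where $m_t=\mathbb E_\Theta\|\mathbf H_t\|_F^2$. An alternative route is to bound by norms directly, $\|a\mathbf H_t+\mathbf F_t\|_F\le a\|\mathbf H_t\|_F+M$, and then square; either way works.

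Unrolling the recursion gives $m_t\le \rho^t m_0+K/(1-\rho)$. The second-order recurrence also needs $m_1$ to be finite; it is, since it is the other initial state with finite second moment, or it is obtained from $\mathbf H_0$ and $\mathbf H_{-1}$ by one step. Starting the recursion at $t=1$ then gives $C_\Theta=\max(m_0,m_1)+K/(1-\rho)<\infty$.

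The main step to get right is the bound on the cross terms. Conditional mean-zero noise requires integrability, so we first check by induction that each $m_t<\infty$, which follows from the same inequality. With that in place, the rest is routine. The key structural facts are that $a<1$ supplies contraction and that $\tanh$ and sigmoid make the drift and diffusion uniformly bounded, independent of $\mathbf H_t$ and $\mathbf H_{t-1}$. This is why the second-order dependence on $\mathbf H_{t-1}$ causes no difficulty.
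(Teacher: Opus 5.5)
Your proposal is correct and follows the paper's proof essentially step for step. Both arguments bound the drift uniformly via $\tanh$ and the conditional noise energy by $Nd_h$ via the sigmoid, apply Young's inequality with $(1+\eta)a^2<1$ to obtain a linear recursion for $\mathbb E_\Theta\|\mathbf H_t\|_F^2$, and unroll it. Your explicit handling of the cross terms, of integrability, and of the second initial state $\mathbf H_1$ fills in details that the paper's sketch leaves implicit.
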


\begin{proof}
Since $\tanh(\cdot)$ is bounded and the sampled weights are finite-dimensional, there exists
$M_{\mu,\Theta}<\infty$ such that
\[
\|\boldsymbol{\mu}(\mathbf H_t,\mathbf H_{t-1},\mathbf A)\|_F
\leq
M_{\mu,\Theta}
\]
for all states. Moreover, because $\mathrm{sigmoid}(z)\in(0,1)$,
\begin{equation}
\mathbb E_{\Theta}
\left[
\|\boldsymbol{\sigma}(\mathbf H_t,\mathbf H_{t-1},\mathbf A)
\odot
\boldsymbol{\epsilon}_t\|_F^2
\mid
\mathcal F_t
\right]
\leq
Nd_h.
\end{equation}
Choose $\eta>0$ such that $\alpha=(1+\eta)a^2<1$. Then
\begin{equation}
\mathbb E_{\Theta}
\left[
\|\mathbf H_{t+1}\|_F^2
\mid
\mathcal F_t
\right]
\leq
\alpha\|\mathbf H_t\|_F^2+C_{0,\Theta},
\end{equation}
for some $C_{0,\Theta}<\infty$. Iterating this inequality gives
\[
\mathbb E_{\Theta}\|\mathbf H_t\|_F^2
\leq
\alpha^t\mathbb E_{\Theta}\|\mathbf H_0\|_F^2
+
\frac{1-\alpha^t}{1-\alpha}C_{0,\Theta},
\]
which proves the claim.
\end{proof}

This result establishes moment control of the generated trajectories. It should not be interpreted
as a universal geometric-ergodicity or total-variation mixing-time guarantee, which would require
additional irreducibility and minorization assumptions. For synthetic data generation, the main
implication is that long trajectories can be generated while keeping latent magnitudes controlled.

\paragraph{Temporal memory and oscillatory responses.}
The dependence on both $\mathbf H_t$ and $\mathbf H_{t-1}$ introduces delayed feedback. This
second-order structure enables temporal autocorrelation, phase-lagged responses, and oscillatory
patterns. Around a reference state $\bar{\mathbf z}$, the conditional mean dynamics admit the local
linearization
\begin{equation}
\delta\mathbf z_{t+1}
\approx
\mathbf J(\bar{\mathbf z})\delta\mathbf z_t
+
\mathbf G(\bar{\mathbf z})\boldsymbol{\epsilon}_t,
\qquad
\mathbf z_t=
\begin{bmatrix}
\mathrm{vec}(\mathbf H_t)\\
\mathrm{vec}(\mathbf H_{t-1})
\end{bmatrix},
\end{equation}
where
\begin{equation}
\mathbf J(\bar{\mathbf z})
=
\begin{bmatrix}
a\mathbf I+D_1\boldsymbol{\mu}(\bar{\mathbf z})
&
D_2\boldsymbol{\mu}(\bar{\mathbf z})
\\
\mathbf I
&
\mathbf 0
\end{bmatrix},
\end{equation}
and $\mathbf G(\bar{\mathbf z})$ denotes the diffusion amplitude evaluated at the same reference
state. Here $D_1\boldsymbol{\mu}$ and $D_2\boldsymbol{\mu}$ are the Jacobian blocks of the vectorized
drift with respect to $\mathrm{vec}(\mathbf H_t)$ and $\mathrm{vec}(\mathbf H_{t-1})$.

\begin{proposition}[Local mechanism for quasi-periodic responses]
\label{prop:quasi_periodic}
Suppose $\mathbf J(\bar{\mathbf z})$ has a stable complex conjugate eigenvalue pair
\[
\lambda_{1,2}=re^{\pm i\theta},
\qquad
0<r<1,
\qquad
\theta\in(0,\pi).
\]
If the corresponding mode is excited by the stochastic innovation and observed by the output
projection, then the observed sequence has an enhanced resonant response near
\begin{equation}
f_{\mathrm{osc}}=\frac{\theta}{2\pi}.
\end{equation}
\end{proposition}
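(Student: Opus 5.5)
We work entirely with the local linearization above, so the statement is a claim about the linear Gaussian system $\delta\mathbf z_{t+1}=\mathbf J\delta\mathbf z_t+\mathbf G\boldsymbol\epsilon_t$, with $\mathbf J=\mathbf J(\bar{\mathbf z})$ and $\mathbf G=\mathbf G(\bar{\mathbf z})$ frozen. The observation is $y_t=\mathbf c^\top\delta\mathbf z_t$, where $\mathbf c$ stacks the output projection $\mathbf w$ onto the $\mathbf H_t$ block and is zero on the $\mathbf H_{t-1}$ block. We interpret ``enhanced resonant response'' as a local maximum, or peak, of the observed power spectral density near $\theta/2\pi$. The plan has three steps.

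\textbf{Step 1: stationarity.} If all eigenvalues of $\mathbf J$ lie strictly inside the unit disk, the linear system has a stationary solution
\[
\delta\mathbf z_t=\sum_{k\ge0}\mathbf J^k\mathbf G\boldsymbol\epsilon_{t-1-k}.
\]
This is consistent with the moment control of Theorem~\ref{thm:moment_control}. If only the pair $re^{\pm i\theta}$ is assumed stable, we instead restrict to the spectral subspace of that pair, which is invariant under $\mathbf J$, and study it alone.

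\textbf{Step 2: spectral density and modal decomposition.} The observed spectral density is
\[
S_y(\omega)=\mathbf c^\top(e^{i\omega}\mathbf I-\mathbf J)^{-1}\mathbf G\mathbf G^\top(e^{-i\omega}\mathbf I-\mathbf J^\top)^{-1}\mathbf c .
\]
Diagonalize $\mathbf J$ (or use a Jordan form, noting that a simple eigenvalue pair suffices) with right eigenvectors $\mathbf v_j$ and left eigenvectors $\mathbf u_j$. Then
\[
(e^{i\omega}\mathbf I-\mathbf J)^{-1}=\sum_j\frac{\mathbf v_j\mathbf u_j^*}{e^{i\omega}-\lambda_j}.
\]
The hypotheses ``excited by the innovation'' and ``observed by the projection'' become $\mathbf u_1^*\mathbf G\neq0$ and $\mathbf c^\top\mathbf v_1\neq0$. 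The contribution of the pair $\lambda_{1,2}$ to $S_y$ is then
\[
\beta\,\bigl|e^{i\omega}-re^{i\theta}\bigr|^{-2}+\bar\beta\,\bigl|e^{i\omega}-re^{-i\theta}\bigr|^{-2},
\]
plus cross terms, where $\beta>0$ is determined by the residue.

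\textbf{Step 3: locating the peak.} We have
\[
\bigl|e^{i\omega}-re^{i\theta}\bigr|^2=1+r^2-2r\cos(\omega-\theta).
\]
This is minimized at $\omega=\theta$, where it equals $(1-r)^2$. So the modal term is of order $(1-r)^{-2}$ at $\omega=\theta$ and of order $1$ once $|\omega-\theta|$ is bounded away from $0$. The peak has height about $\beta(1-r)^{-2}$ and width about $1-r$. Converting angular frequency to cycles per step gives $f_{\mathrm{osc}}=\theta/2\pi$. Equivalently, in the time domain, the autocovariance contains a term $\propto r^{|h|}\cos(\theta h+\varphi)$, which is a damped oscillation of period $2\pi/\theta$.

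\textbf{Main obstacle.} The hard part is making ``enhanced'' rigorous, since the other modes, the conjugate term, and the cross terms shift and blur the peak. We would first prove an asymptotic statement: for fixed $\theta$, as $r\to1$ the ratio of $S_y(\theta)$ to $\sup_{|\omega-\theta|>\delta}S_y(\omega)$ tends to infinity, and the argmax of $S_y$ converges to $\theta$. This uses the fact that all other terms stay bounded uniformly on a neighborhood of $\omega=\theta$, provided no other eigenvalue approaches $e^{i\theta}$. For fixed $r<1$, we would state only that the peak location lies within $O(1-r)$ of $\theta$. Finally, we would remark that the whole argument is local: it holds only while the trajectory stays near $\bar{\mathbf z}$, which is why the statement is phrased as a mechanism rather than a global spectral property.
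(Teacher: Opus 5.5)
Your proposal is correct and follows the same route as the paper's sketch. You freeze the diffusion, express the readout spectrum through the resolvent $(e^{i\omega}\mathbf I-\mathbf J)^{-1}$ (equivalently the paper's $(\mathbf I-e^{-i\omega}\mathbf J)^{-1}$), and note that an excited and observed pair $re^{\pm i\theta}$ contributes a term of height $\sim(1-r)^{-2}$ at $\omega=\theta$; the modal expansion just makes this explicit.

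Your asymptotic refinement needs two small repairs. Because $S_y$ is even in $\omega$, the conjugate pole produces an equally tall peak at $-\theta$, so the supremum and the argmax must be taken over $\omega\in[0,\pi]$. Also, the cross terms involving $\lambda_1$ are of order $(1-r)^{-1}$ near $\theta$ rather than bounded, although they remain subdominant to $(1-r)^{-2}$.
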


\begin{proof}[Proof sketch]
For the frozen-diffusion linear system, the spectral response of a linear readout is governed by
\[
(\mathbf I-e^{-i\omega}\mathbf J(\bar{\mathbf z}))^{-1}.
\]
A complex eigenvalue pair $re^{\pm i\theta}$ enhances the response near angular frequencies
$\pm\theta$. If this mode is both excited and observed, the output spectrum contains a resonant
component near $\theta/(2\pi)$.
\end{proof}

This proposition identifies a local mechanism for quasi-periodic behavior rather than exact
periodicity. With Gaussian excitation, damped oscillatory modes can be repeatedly stimulated and
appear as oscillatory autocorrelation, spectral concentration, or phase-lagged propagation.

\paragraph{Graph-spectral diversity.}
The graph filters couple temporal feedback with graph-spectral structure. If $(\mathbf q_i,s_i)$ is
an eigenpair of $\widehat{\mathbf L}$, then
\begin{equation}
\mathbf T_1\mathbf q_i=s_i\mathbf q_i,
\qquad
\mathbf T_2\mathbf q_i=(2s_i^2-1)\mathbf q_i.
\end{equation}
Thus, different graph frequencies receive different first- and second-order feedback. Since
$2s_i^2-1$ changes sign at $|s_i|=1/\sqrt{2}$, some spectral components may behave like smoothing
or diffusion modes, while others may behave like delayed-feedback or propagation modes. Although
the nonlinear transformations and channel mixing prevent a global decomposition into independent
graph modes, this view explains why diffusion-like, wave-like, and oscillatory patterns can arise in
the synthetic data.

\paragraph{Nonlinear and irregular regimes.}
The bounded nonlinearity $\tanh(\cdot)$ allows the generator to interpolate between near-linear and
saturated regimes. Small effective weights lead to mean-reverting stochastic autoregressive
behavior, whereas larger effective weights can push the drift into saturation, producing
threshold-like transitions, bursts, plateaus, and intermittent dynamics.

The deterministic skeleton can also be used to diagnose chaotic-like finite-time sensitivity.
Removing the Gaussian excitation yields
\[
\mathbf z_{t+1}=\mathbf f_{\Theta}(\mathbf z_t).
\]
Let
\[
\mathbf \Phi_T(\mathbf z_0)
=
D\mathbf f_{\Theta}(\mathbf z_{T-1})
\cdots
D\mathbf f_{\Theta}(\mathbf z_0).
\]
The finite-time Lyapunov diagnostic is
\begin{equation}
\lambda_T(\mathbf z_0)
=
\frac{1}{T}
\log
\left\|
\mathbf \Phi_T(\mathbf z_0)
\right\|_2.
\end{equation}
A positive $\lambda_T$ indicates local exponential separation of nearby initial conditions over the
finite window. Such behavior is not universal across sampled systems, but it can occur in high-gain
nonlinear regimes. Since the drift is bounded and $a\in(0,1)$, local expansion can coexist with
global moment control, yielding bounded but irregular chaotic-like trajectories.

\paragraph{Observation, masking, and corpus-level coverage.}
The observed sequence is obtained through a random projection of the latent state, so latent modes
may be amplified, attenuated, or hidden by the observation map. Random masking further introduces
external observational corruption without changing the latent transition law, mimicking realistic
settings where structured dynamics are only partially observed.

The randomization over graphs, weights, noise amplitudes, and observation maps provides corpus-level
regime diversity. If $\mathcal R$ is a set of configurations inducing a particular spatio-temporal
regime and $\nu(\mathcal R)=p_{\mathcal R}>0$, then a corpus of $M$ independently sampled systems
contains at least one system from $\mathcal R$ with probability
\begin{equation}
1-(1-p_{\mathcal R})^M.
\end{equation}
Thus, any regime with nonzero probability under the sampling distribution appears in sufficiently
large synthetic corpora with high probability.

\paragraph{Summary.}
In summary, the generator produces a controlled yet diverse family of graph-coupled stochastic
processes: (i) conditioned on each sampled system, the latent dynamics are second-order Markov with
conditionally Gaussian and heteroscedastic transitions; (ii) Chebyshev graph filtering induces
spectrally controlled spatial dependence and localized propagation; (iii) the dissipative
coefficient and bounded nonlinearities provide moment-controlled long trajectories; (iv) the
second-order recurrence enables temporal memory, delayed feedback, and quasi-periodic responses;
(v) graph-spectral filtering supports diffusion-like, wave-like, and oscillatory patterns across
different spectral components; (vi) nonlinear saturation and high-gain regimes can yield
intermittent or chaotic-like behavior; and (vii) random projection and masking induce partial and
imperfect observation. These properties align the synthetic corpus with the structural
characteristics of real-world spatio-temporal forecasting tasks while keeping the generation process
controlled and scalable.

\section{Supplementary Experimental Details}

\subsection{Evaluation Datasets}
To evaluate downstream spatio-temporal (ST) forecasting, we curate a suite of real-world benchmarks spanning traffic flow, speed, electricity consumption, and meteorology, as summarized in Table~\ref{tab:datasets}. These datasets are characterized by significant heterogeneity in spatial granularity (e.g., urban regions vs. individual sensors), temporal resolutions (ranging from 5-minute to hourly intervals), and diverse prediction targets. Such multifaceted diversity provides a rigorous testbed to validate the cross-task generalization and adaptability of FactoST across both short- and long-term forecasting horizons.

\begin{table*}[h!]
  \centering
  \caption{List of evaluation spatio-temporal datasets.}
  \setlength{\tabcolsep}{4.5pt}
  \footnotesize
  \renewcommand{\arraystretch}{1.0}
  \resizebox{\linewidth}{!}{
        \begin{tabular}{lccccc}
        \toprule
        \textbf{Dataset} & \textbf{Category} & \textbf{\# Nodes} & \textbf{Sample rate} & \textbf{Time span (Y/M/D)} & \textbf{\# Time Points} \\
        \midrule
        PEMS03 & Traffic flow & 358 & 5 minutes & 2018/09/01 -- 2018/11/30 & 26208 \\
        PEMS04 & Traffic flow & 307 & 5 minutes & 2018/01/01 -- 2018/02/28 & 16992 \\
        PEMS07 & Traffic flow & 883 & 5 minutes & 2017/05/01 -- 2017/08/31 & 28224 \\
        PEMS08 & Traffic flow & 170 & 5 minutes & 2016/07/01 -- 2016/08/31 & 17856 \\
        PEMS-BAY & Traffic speed & 325 & 5 minutes & 2017/01/01 -- 2017/06/30 & 52116 \\
        METR-LA & Traffic speed & 207 & 5 minutes & 2012/03/01 -- 2012/06/27 & 34272 \\
        \bottomrule
        \end{tabular}
    }
\label{tab:datasets}
\end{table*}

\subsection{Evaluation Metrics}
To evaluate the prediction performance, we employ two widely used regression metrics: Mean Absolute Error (MAE) and Root Mean Squared Error (RMSE). Let $\mathbf{Y} = \{Y_1, \dots, Y_N\}$ denote the ground truth of the spatio-temporal data and $\hat{\mathbf{Y}} = \{\hat{Y}_1, \dots, \hat{Y}_N\}$ represent the values predicted by the model, where $N$ is the total number of test samples. These metrics are formally defined as follows:

\begin{equation}
\text{RMSE}(\mathbf{Y}, \hat{\mathbf{Y}}) = \sqrt{\frac{1}{N} \sum_i^N \left(Y_i - \hat{Y}_i\right)^2}, \quad \text{MAE}(\mathbf{Y}, \hat{\mathbf{Y}}) = \frac{1}{N} \sum_i^N \left|Y_i - \hat{Y}_i\right|,
\end{equation}

\subsection{Baselines and Implementation}

To ensure fair comparisons, we evaluate all baseline models within a unified framework using standardized metrics (MAE and RMSE). Model hyperparameters are either retained at their default settings, as reported in the original literature, or optimized via grid search on the validation set. The specific implementation strategies for each baseline category are outlined below.

\subsubsection{Time Series Foundation Models (TSFMs)}

\begin{itemize}[leftmargin=*]
    \item \textbf{TimesFM} \cite{Das2024TimesFM}: Developed by Google Research, this is a large-scale, pretrained, decoder-only time series foundation model capable of high-accuracy univariate forecasting across various domains and frequencies. We utilize the official implementation (\url{https://github.com/google-research/timesfm}).
    The pretrained weights are sourced from \url{https://huggingface.co/google/timesfm-1.0-200m}.
    
    \item \textbf{Moirai} \cite{liu2025moirai}: A large-scale, pretrained, encoder-only time series foundation model introduced by Salesforce AI Research. 
    We adapt the official implementation from \url{https://github.com/SalesforceAIResearch/uni2ts}, utilizing the checkpoint available at \url{https://huggingface.co/Salesforce/moirai-1.0-R-base}.
\end{itemize}

\subsubsection{Spatio-Temporal Foundation Models (STFMs)}

\begin{itemize}[leftmargin=*]
    \item \textbf{UniST} \cite{yuan2024unist}: A universal STFM driven by prompt learning, pretrained across multiple urban scenarios to achieve robust generalization. As the official codebase (\url{https://github.com/tsinghua-fib-lab/UniST}) inherently supports only fixed 6-step prediction horizons, we retrain the model on 13 datasets from its original release to fit our 12- and 96-step forecasting scenarios.
    
    \item \textbf{OpenCity} \cite{li2024opencity}: A versatile STFM supporting both zero-shot and few-shot forecasting across a wide range of city-level applications. We integrate this model into our pipeline using the \texttt{Opencity-plus.pth} checkpoint.
\end{itemize}

\section{Further Discussion}

\subsection{Limitations \& Future Improvements}
\label{appendix:limitations}

While NeoST demonstrates strong zero-shot generalization and inference efficiency across diverse real-world benchmarks, we acknowledge several methodological and architectural boundaries in the current framework that provide important avenues for future research.

\textbf{Sim-to-Real Gap in Topology and Noise Distributions.} 
Our synthetic data generation engine relies on Erdős-Rényi and Poisson degree models to sample spatial graph topologies. Furthermore, the temporal evolution utilizes a standard Gaussian noise formulation, $\epsilon_t \sim \mathcal{N}(0, I)$, for stochastic excitation. However, real-world complex networks (e.g., urban traffic infrastructure, epidemiological contact networks) frequently exhibit scale-free or small-world topological properties. Similarly, exogenous shocks in physical systems often manifest as heavy-tailed distributions rather than ideal Gaussian perturbations. These idealized assumptions may introduce a ``sim-to-real'' distribution gap, potentially constraining the model's robustness against extreme anomalies or black-swan events. Future iterations of the generative engine should incorporate scale-free network generation algorithms (e.g., the Barabási-Albert model) and non-Gaussian jump processes (e.g., Lévy flights) to more accurately reflect real-world macroscopic statistics.

\textbf{Computational Scalability of Test-Time Adaptation.} 
To mitigate distribution shifts during inference, NeoST employs a Ridge Regression-based Test-Time Adaptation (TTA) mechanism. This requires computing the closed-form solution $w^{*} = (Z_{obs}^{\top}Z_{obs} + \lambda I)^{-1}Z_{obs}^{\top}\tilde{v}_{obs}$. While highly effective for the moderately sized spatial graphs evaluated in our benchmarks, the matrix inversion step incurs a cubic computational complexity. For ultra-large-scale spatial networks (e.g., nation-wide power grids or comprehensive city-level traffic systems with tens of thousands of nodes), this online adaptation step may introduce latency bottlenecks, potentially offsetting the high inference efficiency inherent to our latent-space reasoning paradigm. Future work could explore iterative solvers, gradient-based meta-learning, or low-rank approximations to accelerate this process at scale.

\textbf{Lack of Uncertainty Quantification.} 
The current NeoST architecture is designed exclusively for deterministic point forecasting, optimizing for reconstruction fidelity using the Frobenius norm in the objective function. In many critical spatio-temporal applications, such as extreme weather forecasting, epidemiological modeling, or autonomous fleet routing, point predictions are insufficient; decision-makers require reliable uncertainty quantification and confidence intervals to assess risk. The absence of native probabilistic forecasting capabilities limits the model's immediate utility in risk-sensitive scenarios. Future work will extend the Output Adapter to predict the parameters of continuous probability distributions 
or integrate quantile regression mechanisms, thereby enabling robust and calibrated uncertainty estimation for downstream tasks.

\subsection{Broader Impacts}
\label{appendix:broader impacts}

The development of NeoST, a Spatio-Temporal Foundation Model pre-trained exclusively on synthetic data, offers significant societal benefits by mitigating the demographic, geographic, and socioeconomic biases typically encoded in real-world sensor deployments. Furthermore, its efficient architecture promotes environmentally sustainable deployment compared to massive foundation models. However, deploying such models in safety-critical domains such as urban traffic management and disaster response, carries inherent risks. The "sim-to-real" distribution gap may cause the model to behave unpredictably during extreme, heavy-tailed anomalies or black-swan events, potentially endangering public safety or infrastructure security. Additionally, while our pre-training data involves no human subjects, the model's capacity to analyze sequential mobility data could be misused for mass surveillance or discriminatory practices if fine-tuned on sensitive, unanonymized tracking data. To ensure safe and equitable use, we urge practitioners to integrate robust uncertainty quantification prior to deployment, strictly adhere to human rights and data privacy regulations, and validate the model rigorously against domain-specific, real-world conditions.


\end{document}